\documentclass{article}

\usepackage{natbib}

\usepackage{bm}
\usepackage{PRIMEarxiv}
\usepackage{amsthm}
\usepackage{amssymb}
\usepackage{graphicx,amsmath,amsfonts,amssymb,bm,hyperref,url,breakurl,epsfig,epsf,color,fullpage,MnSymbol,mathbbol, fmtcount, semtrans
} 
\numberwithin{equation}{section}

\usepackage{titlesec}
\usepackage[]{mdframed}
\usepackage[ruled,algo2e]{algorithm2e}
\usepackage{enumitem}
\usepackage{booktabs}

\usepackage{tikz}
\usepackage{pgfplots}

\usetikzlibrary{pgfplots.groupplots}
\usepackage{amsmath,amssymb}

\titleformat{\paragraph}
{\normalfont\normalsize\bfseries}{\theparagraph}{1em}{}
\titlespacing*{\paragraph}
{0pt}{3.25ex plus 1ex minus .2ex}{1.5ex plus .2ex}

\usepackage{caption,subcaption}

\usepackage[bottom,hang,flushmargin]{footmisc} 
\usepackage[font=small]{caption}

\usepackage{hyperref}
\definecolor{darkred}{RGB}{150,0,0}
\definecolor{darkgreen}{RGB}{0,150,0}
\definecolor{darkblue}{RGB}{0,0,200}
\hypersetup{colorlinks=true, linkcolor=darkred, citecolor=darkgreen, urlcolor=black}

\newtheorem{theorem}{Theorem}
\newtheorem{lemma}{Lemma}

\newtheorem{proposition}{Proposition}
\newtheorem{definition}{Definition}
\newtheorem{assumption}{Assumption}

\newtheorem{remark}{Remark}

\newcommand{\R}{\mathbb{R}}

\usepackage[utf8]{inputenc} 
\usepackage[T1]{fontenc}    
\usepackage{hyperref}       
\usepackage{url}            
\usepackage{booktabs}       
\usepackage{amsfonts}       
\usepackage{nicefrac}       
\usepackage{microtype}      
\usepackage{lipsum}
\usepackage{fancyhdr}       
\usepackage{graphicx}       
\graphicspath{{media/}}     

\title{Classification with Abstention under Class-Conditional Error Constraints 
}

\author{
Mohammadreza M. Kalan\\
Univ Rennes, Ensai, CNRS,\\ CREST–UMR 9194,\\
F-35000 Rennes, France\\
\texttt{mohammadreza.kalan@ensai.fr}
\And
Yuyang Deng\\
Columbia University,\\ Department of Statistics\\
\texttt{yd2824@columbia.edu}
\And
Sanaz Hamidi\\
Univ Rennes, Ensai, CNRS,\\ CREST–UMR 9194,\\
F-35000 Rennes, France\\
\texttt{sanaz.hamidi@eleve.ensai.fr}
}

\begin{document}
\maketitle

\begin{abstract}
We study binary classification with abstention under separate class-conditional error constraints, with the objective of minimizing abstention while keeping both errors below prescribed thresholds. We characterize the distribution-free minimax rate of excess abstention risk, up to logarithmic factors, in terms of the complexity of the hypothesis class and the sample size. To make the framework amenable to computation with models such as neural networks, we introduce surrogate-loss formulations and derive finite-sample guarantees for excess surrogate ambiguity risk. We formulate the resulting learning task as a constrained optimization problem and characterize its computational complexity in the convex setting. Finally, we evaluate our approach on various datasets and compare its performance with a competing method for this problem.

\end{abstract}

\section{Introduction}

In many decision-making problems, misclassification can have serious consequences. In such applications, it is often preferable for a classifier to abstain when the available evidence is insufficient to make a prediction \citep{elyaniv2010foundations,mozannar2020consistent}. A medical diagnostic system, for example, may confidently identify some patients as healthy or at high risk, while referring uncertain cases to a specialist for further examination \citep{vazhentsev2026uncertainty,das2019interpretable,hamid2017machine,da2011diagnostic}. Similarly, in fraud detection, a system may automatically approve clearly legitimate transactions, block highly suspicious ones, and send ambiguous cases for manual review \citep{singh2026cost, alves2023fifar}. Abstention allows a classification system to avoid unreliable decisions, but abstaining too often limits its practical usefulness. The central challenge is therefore to minimize abstention while controlling classification errors.

In many such applications, the two types of classification errors have different consequences, making it important to control them separately. In medical diagnosis, for instance, missing a high-risk patient may have severe health consequences, while incorrectly flagging a healthy patient may lead to unnecessary tests or interventions. Separate control is also important when the classes are highly imbalanced or one class is scarce, since a small overall classification error does not necessarily imply a small error within each class. One common approach is to constrain the error associated with one class below a prescribed level while optimizing performance on the other, as in Neyman--Pearson classification \citep{scott2005neyman,rigollet2011neyman,tong2013plugin,tong2020neyman, kalan2025transfer}. In applications where both types of errors are critical, it is natural to instead require both class-conditional errors to remain below prescribed thresholds. However, because the two errors trade off against each other, these requirements may not be simultaneously achievable when a prediction is made for every observation. Abstention can then make it possible to satisfy both error constraints.

In this work, we study the problem of minimizing abstention while requiring the two class-conditional errors to remain below given levels $\alpha_0$ and $\alpha_1$. Given a finite labeled sample and a hypothesis class, the goal is to learn class-specific decision rules whose prediction regions capture sufficient probability mass under the corresponding class distributions to keep each class-conditional error below its prescribed threshold. These two regions may overlap, and their intersection forms the ambiguity region on which the classifier abstains. Thus, minimizing abstention amounts to minimizing how often observations fall in the ambiguity region, subject to keeping the class-conditional errors below the thresholds.

We first study the \textit{distribution-free statistical complexity} of this problem under the $0$-$1$ loss formulation and characterize its minimax rate up to logarithmic factors. For a hypothesis class with finite VC dimension, we establish finite-sample upper bounds based on empirical risk minimization and prove matching minimax lower bounds. We then introduce a surrogate-loss formulation that is more amenable to computation. For function classes with bounded Rademacher complexity, we derive finite-sample statistical guarantees for both the class-conditional constraints and the resulting abstention performance. The surrogate formulation leads to a constrained optimization problem, which we solve using a stochastic primal-dual  algorithm \citep{NIPS2012_c52f1bd6} and characterize its computational complexity in the convex setting. Finally, we evaluate the proposed approach on synthetic and real-world datasets and compare its performance with the plug-in method of \cite{lei2014classification}.

\section{Related Work}
\citet{lei2014classification} introduced a closely related framework, which considers classification under two class-conditional error constraints and minimizes the probability of the ambiguous region, on which the classifier abstains from making a  prediction. Under suitable regularity conditions, \citet{lei2014classification} characterizes the universally population-optimal classification regions (i.e., not restricted to a hypothesis class) through an extension of the Neyman--Pearson lemma \citep[Theorem 3.2.1]{lehmann1986testing}, and develops plug-in estimation procedures based on estimates of the conditional class probabilities, together with convergence guarantees under suitable regularity and margin conditions. In contrast, we study this constrained ambiguity-minimization problem in a distribution-free setting over a prescribed hypothesis class. In particular, we characterize the statistical minimax rate in terms of the complexity of the hypothesis class by establishing finite-sample upper bounds and matching lower bounds. We further develop a score-function formulation with surrogate losses and derive finite-sample statistical guarantees for function classes with bounded Rademacher complexity, making the framework directly applicable to modern learning models, including neural networks. We also provide a practical constrained optimization procedure for training such models. Finally, our empirical study directly compares the two approaches and demonstrates improved performance across a collection of real and synthetic datasets.

Other constrained formulations of classification with abstention, or rejection, have also been studied \citep{pietraszek2005optimizing,pietraszek2007use,li2006confidence,geifman2017selective,geifman2019selectivenet,denis2020consistency,shekhar2019binary,gangrade2021selective}. Among the formulations most closely related to ours, \citet{hanczar2008classification}, in a setting closely related to that of \citet{lei2014classification}, minimize the rejection rate subject to class-specific error constraints conditional on non-rejection. Their analysis, however, assumes a fixed classifier score and optimizes only the two rejection thresholds, without providing statistical guarantees that the resulting procedure converges to a population-optimal classifier. \citet{elyaniv2010foundations} study the risk--rejection trade-off in the noise-free realizable setting, deriving bounds on achievable trade-offs with particular emphasis on minimizing rejection under zero classification error among non-rejected predictions. Relatedly, the bounded-improvement formulation, which maximizes the probability of making a non-abstained prediction subject to a prescribed threshold on the total classification error among non-abstained predictions, and the bounded-abstention formulation, which minimizes this conditional classification error subject to a prescribed threshold on the probability of making a non-abstained prediction, were considered by \citet{pietraszek2005optimizing} and further studied by \citet{franc2023optimal}. \citet{franc2023optimal} characterize the corresponding population-optimal strategies in terms of the Bayes classifier and an appropriate, potentially randomized, abstention rule, without studying the sample complexity of these formulations.

Another widely studied formulation of classification with abstention is the
cost-based approach, in which abstention is assigned a fixed cost and the
objective is to minimize the expected cost of misclassification and abstention \citep{chow1970optimum, herbei2006classification, bartlett2008classification,yuan2010classification,ramaswamy2015consistent}. The classical formulation goes back to \citet{chow1970optimum}, who characterized
the Bayes-optimal rejection rule obtained by trading off the cost of
misclassification against the cost of rejection. \citet{herbei2006classification} studied statistical properties of plug-in and empirical risk minimization procedures under the classical cost-based reject-option formulation of \citet{chow1970optimum}.
\citet{bartlett2008classification} introduced a hinge-type convex surrogate loss for this formulation and established consistency and excess-risk guarantees. \citet{yuan2010classification} provided a
more general analysis of convex surrogate losses, giving conditions for
consistency and bounds relating excess surrogate risk to excess reject-option
risk. In contrast to our
formulation, these cost-based approaches control the trade-off between overall
classification error and abstention through a single penalized objective, rather
than imposing separate class-conditional error constraints; moreover, although some establish excess-risk convergence rates, they do not derive matching minimax upper and lower bounds. 

Finally, our formulation is also related to Neyman--Pearson classification, which addresses asymmetric class-conditional errors by controlling one error while minimizing the other \citep{scott2005neyman,rigollet2011neyman,tong2013plugin,zhao2016neyman,tong2018neyman,tong2020neyman,kalan2024distribution,kalan2024tight,kalan2025transfer,mousavi2026neyman}. Without abstention, the two class-conditional errors trade off against each other and therefore cannot in general be simultaneously controlled at arbitrarily small levels. Our formulation allows both errors to be controlled, with abstention serving as the price for this simultaneous control.
\section{Problem Setting}
Let $(\mathcal X,\Sigma)$ be a measurable feature space, and let $(X,Y)$ be a random pair taking values in $\mathcal X\times\{0,1\}$ with joint probability distribution $P$. We denote by $P_X$ the marginal distribution of $X$. For each $j\in\{0,1\}$, let $\pi_j:=\mathbb P(Y=j)$, and assume that $\pi_j>0$. Let $P_j$ denote the class-conditional distribution of $X$ given $Y=j$. 

Let $\mathcal H$ be a class of measurable functions
$h:\mathcal X\to\{0,1\}$. We choose a pair
$(h_0,h_1)\in\mathcal H\times\mathcal H$ and define
\[
    C_j(h_j):=\{x\in\mathcal X:h_j(x)=1\},
    \qquad j\in\{0,1\}.
\]
The sets $C_0(h_0)$ and $C_1(h_1)$ are viewed as candidate regions for
classifying points as class $0$ and class $1$, respectively. Points in $C_0(h_0)\setminus C_1(h_1)$ are classified as class $0$, and points in $C_1(h_1)\setminus C_0(h_0)$ are classified as class $1$. On the intersection $C_0(h_0)\cap C_1(h_1)$, the procedure abstains from classification; we call this intersection the ambiguous region. For $j\in\{0,1\}$, define the class-conditional error \[ R_j(h_j):=P_j(h_j(X)=0). \]
Thus, $R_j(h_j)$ is the probability that the class-$j$ candidate rule fails to include a point drawn from class $j$. For user-specified thresholds $\alpha_0,\alpha_1\in(0,1)$, the goal is
to choose $(h_0,h_1)\in\mathcal H\times\mathcal H$ such that the
class-conditional errors for classes $0$ and $1$ are at most
$\alpha_0$ and $\alpha_1$, while minimizing the probability
of the ambiguous region, denoted by
\[
    R_{\mathrm{amb}}(h_0,h_1)
    :=
    P_X\bigl(h_0(X)=1,\ h_1(X)=1\bigr).
\]This leads to the population problem
\begin{equation}\label{eq:population_problem}
\begin{aligned}
\underset{(h_0,h_1)\in\mathcal H\times\mathcal H}{\mathrm{minimize}}
    \quad & R_{\mathrm{amb}}(h_0,h_1) \\
\mathrm{subject\ to}
    \quad & R_0(h_0)\leq \alpha_0,
    \qquad
    R_1(h_1)\leq \alpha_1 .
\end{aligned}
\end{equation}
We assume throughout that the feasible set of~\eqref{eq:population_problem} is nonempty. We denote a solution to~\eqref{eq:population_problem} by
$(h_0^*,h_1^*)$ and its optimal value by $R_{\mathrm{amb}}^*$. The formulation in~\eqref{eq:population_problem} does not require $C_0(h_0)\cup C_1(h_1)=\mathcal X$. This does not create an additional undecided region: by convention, points in $\mathcal X\setminus(C_0(h_0)\cup C_1(h_1))$ may be assigned to class $0$. This gives the rule \[ h'_0(x) := \mathbf 1\{h_0(x)=1 \ \text{or}\ (h_0(x)=0,\ h_1(x)=0)\}, \] with $h_1$ kept unchanged. This modification does not change the ambiguity risk, and the class-conditional constraints remain satisfied. In proper learning, the final decision rule is required to belong to the hypothesis class $\mathcal H$; here, $h'_0$ need not belong to $\mathcal H$. We therefore allow improper learning.

Given $n$ i.i.d. samples $S^n=\{(X_i,Y_i)\}_{i=1}^{n}$, where $(X_i,Y_i)\sim P$, the goal is to learn $(\widehat h_0,\widehat h_1)\in\mathcal H\times\mathcal H$ whose
ambiguity risk is close to the optimal value $R_{\mathrm{amb}}^*$ while
satisfying the class-conditional constraints. We measure its statistical performance by the \textbf{excess ambiguity risk}
\begin{equation}\label{eq:excess_ambiguity_risk}
    \mathcal E_{\mathrm{amb}}(\widehat h_0,\widehat h_1)
    :=
    R_{\mathrm{amb}}(\widehat h_0,\widehat h_1)
    -
    R_{\mathrm{amb}}^* .
\end{equation}

\section{Minimax Rates} 
In this section, we characterize the statistical complexity of 
\eqref{eq:population_problem}.  We first prove an upper bound for an 
empirical constrained learner and then establish a matching minimax
lower bound.
\subsection{Empirical Constrained Learner and Upper Bound}
Let $n_j:=\sum_{i=1}^n \mathbf 1\{Y_i=j\}$ denote the number of  samples from class $j$, for $j\in\{0,1\}$. For $h\in\mathcal H$, define the empirical class-conditional errors \[ \widehat R_j(h) := \frac{1}{n_j}\sum_{i:Y_i=j}\mathbf 1\{h(X_i)=0\}, \qquad j\in\{0,1\}, \]
whenever $n_j>0$. Similarly, for a pair $(h_0,h_1)\in\mathcal H\times\mathcal H$, define the empirical ambiguity risk \[ \widehat R_{\mathrm{amb}}(h_0,h_1) := \frac{1}{n}\sum_{i=1}^n \mathbf 1\{h_0(X_i)=1,\ h_1(X_i)=1\}. \]
Let $\epsilon_0,\epsilon_1>0$ be slack parameters to be determined. The
empirical constrained learner is defined as any solution
$(\widehat h_0,\widehat h_1)\in\mathcal H\times\mathcal H$ to
\begin{equation}\label{eq:empirical_problem}
\begin{aligned}
\underset{(h_0,h_1)\in\mathcal H\times\mathcal H}{\mathrm{minimize}}
    \quad & \widehat R_{\mathrm{amb}}(h_0,h_1) \\
    \mathrm{subject\ to}
    \quad & \widehat R_0(h_0)\leq \alpha_0+\epsilon_0,
    \qquad
    \widehat R_1(h_1)\leq \alpha_1+\epsilon_1 .
\end{aligned}
\end{equation}

\begin{theorem}[Upper bound]
\label{thm:upper_bound}
Assume that $\mathcal H$ has finite VC dimension $d_{\mathcal H}$. Let
$\delta\in(0,1)$ and define the event $ \mathcal E_n:=\{n_0>0,\ n_1>0\}.$ On the event $\mathcal E_n$, for $j\in\{0,1\}$, set
$\epsilon_j(n_j,\delta):=C\sqrt{\{d_{\mathcal H}\log(e n_j)+\log(1/\delta)\}/n_j}$, where
$C>0$ is a universal constant. Let
$(\widehat h_0,\widehat h_1)$ be any solution to
\eqref{eq:empirical_problem} with slack parameters
$\epsilon_0/2$ and $\epsilon_1/2$. Then, conditional on $\mathcal E_n$,
with probability at least $1-2\delta-\frac{\delta}{\mathbb{P}(\mathcal{E}_n)}$,
\[
    R_j(\widehat h_j)\leq \alpha_j+\epsilon_j(n_j,\delta),\ \text{for}\ j\in\{0,1\}, \quad \text{and} \quad
    \mathcal E_{\mathrm{amb}}(\widehat h_0,\widehat h_1)
    \leq
    C
    \sqrt{
    \frac{
    d_{\mathcal H}\log(e n)+\log(1/\delta)
    }{
    n
    }} .
\]
for a universal constant $C>0$.
\end{theorem}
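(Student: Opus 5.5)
The plan is a standard uniform-convergence argument, applied separately to the class-conditional errors and to the ambiguity risk.

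\textbf{Conditioning on labels.} Work conditional on the label vector $(Y_1,\dots,Y_n)$ on $\mathcal E_n$. Given the labels, the features $\{X_i: Y_i=j\}$ are i.i.d.\ from $P_j$, with $n_j$ points. The class $\{x\mapsto \mathbf 1\{h(x)=0\}: h\in\mathcal H\}$ has VC dimension $d_{\mathcal H}$. The classical VC uniform deviation bound (Sauer--Shelah plus symmetrization, or via Rademacher complexity with Massart's lemma) then gives, with conditional probability at least $1-\delta$,
\[
\sup_{h\in\mathcal H}|\widehat R_j(h)-R_j(h)|\le \epsilon_j(n_j,\delta)/2
\]
for a suitable universal $C$. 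Call this event $A_j$. Since the bound holds for every label configuration with $n_j>0$, it also holds conditionally on $\mathcal E_n$.

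\textbf{Uniform convergence for the ambiguity risk.} The ambiguity indicators are $\{x\mapsto h_0(x)h_1(x)\}$, i.e.\ intersections of two sets from a VC class. By Sauer--Shelah, their growth function is at most $(en/d_{\mathcal H})^{2d_{\mathcal H}}$, so the effective VC dimension is $O(d_{\mathcal H})$. Applying the VC bound to the unconditional i.i.d.\ sample of $X_i\sim P_X$ gives, with probability at least $1-\delta$,
\[
\sup_{h_0,h_1}|\widehat R_{\mathrm{amb}}-R_{\mathrm{amb}}|\le \eta:=C\sqrt{(d_{\mathcal H}\log(en)+\log(1/\delta))/n}.
\]
Call this event $B$. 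Conditioning on $\mathcal E_n$ can inflate its failure probability to at most $\delta/\mathbb P(\mathcal E_n)$, which produces the extra term in the statement. A union bound gives total conditional failure probability at most $2\delta+\delta/\mathbb P(\mathcal E_n)$.

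\textbf{Deterministic argument on $A_0\cap A_1\cap B$.}
\begin{itemize}
\item \emph{Constraints.} For $(\widehat h_0,\widehat h_1)$ we have $R_j(\widehat h_j)\le \widehat R_j(\widehat h_j)+\epsilon_j/2\le \alpha_j+\epsilon_j$.
\item \emph{Feasibility of the optimum.} $R_j(h^*_j)\le\alpha_j$ gives $\widehat R_j(h_j^*)\le \alpha_j+\epsilon_j/2$, so $(h_0^*,h_1^*)$ is feasible for the empirical problem with slacks $\epsilon_j/2$.
\item \emph{Excess risk.} By optimality of $(\widehat h_0,\widehat h_1)$,
\[
R_{\mathrm{amb}}(\widehat h)\le \widehat R_{\mathrm{amb}}(\widehat h)+\eta\le \widehat R_{\mathrm{amb}}(h^*)+\eta\le R^*_{\mathrm{amb}}+2\eta.
\]
\end{itemize}
If the infimum of the population problem is not attained, the same argument applied to an $\eta$-optimal feasible pair adds only a constant to the bound.

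The main obstacle is the probabilistic bookkeeping rather than the inequalities. The data-dependent $n_j$ enter the slack, and the ambiguity event is controlled unconditionally and then transferred to the conditional law given $\mathcal E_n$. Using conditional independence given the labels, together with the inequality $\mathbb P(B^c\mid\mathcal E_n)\le \mathbb P(B^c)/\mathbb P(\mathcal E_n)$, resolves both issues and explains the exact form of the probability bound.
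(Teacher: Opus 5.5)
Your proposal is correct and follows the paper's proof essentially step for step: you use uniform convergence for each class-conditional error given the labels (the paper conditions on $n_j=m$ and sums over $m$, which is equivalent), then a growth-function/VC bound for the product class $\{h_0h_1\}$ transferred to $\mathcal E_n$ via $\mathbb P(\mathcal B^c\mid\mathcal E_n)\le \delta/\mathbb P(\mathcal E_n)$, and finally the same feasibility-plus-optimality argument giving excess risk at most $2\eta$. Your remark on a non-attained infimum is a small addition that the paper sidesteps by assuming a minimizer $(h_0^*,h_1^*)$ exists.
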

The preceding upper bound relies on positive slack in the class-conditional
constraints. This relaxation is unavoidable for deterministic learners:
even with complete knowledge of the underlying distribution, imposing the
strict constraints \(R_0(h_0)<\alpha_0\) and \(R_1(h_1)<\alpha_1\) may force
every feasible pair to have maximal ambiguity excess risk. The following
proposition formalizes this impossibility.

\begin{proposition}
\label{prop:strict_feasibility_impossibility}
Fix $\alpha_0,\alpha_1\in(0,1/2)$. There exist a domain $\mathcal X$, a hypothesis class $\mathcal H$ with $d_{\mathcal H}>0$, and a distribution $P$ with $\pi_0=\pi_1=1/2$ such that every pair $(h_0,h_1)\in\mathcal H\times\mathcal H$ satisfying
$
R_0(h_0)<\alpha_0$ and 
$R_1(h_1)<\alpha_1$
has ambiguity excess risk $
\mathcal E_{\mathrm{amb}}(h_0,h_1)=1$.
\end{proposition}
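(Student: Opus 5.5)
The plan is to give an explicit two-point construction. In it, the non-strict problem \eqref{eq:population_problem} attains zero ambiguity because each class-conditional error sits \emph{exactly} at its threshold. Under the strict constraints, the only admissible rule for each class is the constant rule $h\equiv 1$, so the ambiguous region becomes all of $\mathcal X$. The excess risk is then $1-0=1$.

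Concretely, I will take $\mathcal X=\{a,b\}$ with $\Sigma=2^{\mathcal X}$, and $\mathcal H=\{\mathbf 1_{\{a\}},\mathbf 1_{\{b\}},\mathbf 1_{\mathcal X}\}$. I will set $\pi_0=\pi_1=1/2$, and define
\[
P_0(\{a\})=1-\alpha_0,\quad P_0(\{b\})=\alpha_0,\qquad P_1(\{a\})=\alpha_1,\quad P_1(\{b\})=1-\alpha_1 .
\]
First, I check that $d_{\mathcal H}>0$. The class realizes both labels on the single point $a$ (via $\mathbf 1_{\{a\}}$ and $\mathbf 1_{\{b\}}$), so it shatters $\{a\}$ and $d_{\mathcal H}\ge 1$.

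Second, I compute $R^*_{\mathrm{amb}}$. The pair $(h_0,h_1)=(\mathbf 1_{\{a\}},\mathbf 1_{\{b\}})$ gives $R_0(h_0)=P_0(\{b\})=\alpha_0$ and $R_1(h_1)=P_1(\{a\})=\alpha_1$. It is therefore feasible for \eqref{eq:population_problem}, so the feasible set is nonempty as the paper assumes. Its ambiguous region $\{a\}\cap\{b\}$ is empty, so $R^*_{\mathrm{amb}}=0$.

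Third, I enumerate the strict constraints over the three elements of $\mathcal H$.
\begin{itemize}
\item For class $0$: $R_0(\mathbf 1_{\{a\}})=\alpha_0$, which is not $<\alpha_0$. Also $R_0(\mathbf 1_{\{b\}})=1-\alpha_0$, which exceeds $\alpha_0$ because $\alpha_0<1/2$. Finally $R_0(\mathbf 1_{\mathcal X})=0$. Hence only $h_0=\mathbf 1_{\mathcal X}$ satisfies $R_0(h_0)<\alpha_0$.
\item For class $1$: symmetrically, $R_1(\mathbf 1_{\{b\}})=\alpha_1$ and $R_1(\mathbf 1_{\{a\}})=1-\alpha_1>\alpha_1$. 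Hence only $h_1=\mathbf 1_{\mathcal X}$ is admissible.
\end{itemize}
Any strictly feasible pair is therefore $(\mathbf 1_{\mathcal X},\mathbf 1_{\mathcal X})$. This pair has $R_{\mathrm{amb}}=P_X(\mathcal X)=1$, and hence $\mathcal E_{\mathrm{amb}}=1-0=1$.

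There is no real obstacle here. The only point needing care is the choice of $\mathcal H$: it must be rich enough that the boundary pair is feasible (giving $R^*_{\mathrm{amb}}=0$) and that $d_{\mathcal H}>0$. At the same time, it must contain no rule that misses strictly less than $\alpha_j$ mass of $P_j$ while still avoiding the other class's region. Placing exactly $\alpha_j$ of $P_j$'s mass on the point that the opposite class needs is what achieves this. The assumption $\alpha_j<1/2$ is used only to exclude the "wrong" singleton indicator.
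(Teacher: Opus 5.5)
Your proposal is correct and is essentially the paper's proof. It uses the same two-point domain, with each class-conditional distribution placing exactly mass $\alpha_j$ on the point that the other class's region needs, so that $R^*_{\mathrm{amb}}=0$ while the only strictly feasible pair is $(\mathbf 1,\mathbf 1)$. The differences are only cosmetic: the roles of $a$ and $b$ are swapped, and you omit the zero classifier, which gives $d_{\mathcal H}=1$ rather than the paper's $2$; either value suffices for $d_{\mathcal H}>0$.
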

We next define a randomized learning rule that satisfies the class-conditional constraints without slack. Let $(\widehat h_0,\widehat h_1)$ be the empirical hypotheses obtained by the procedure in \eqref{eq:empirical_problem}. Let $\mathbf{1}$ denote the classifier that is identically equal to one on $\mathcal X$. On the event $\mathcal E_n=\{n_0>0,n_1>0\}$, define $q_j
:=
\frac{\alpha_j}{\alpha_j+\epsilon_j(n_j,\delta)}, \ j\in\{0,1\}.$ Given the data $S^n$, independently draw $B_j\sim \operatorname{Bernoulli}(q_j)$, and define the randomized classifier
\begin{align}\label{random_learner}
\widetilde h_j
:=
B_j\widehat h_j+(1-B_j)\mathbf{1},
\qquad j\in\{0,1\}.
\end{align}
Let $\mathbb E_{\zeta}$ denote expectation with respect to the auxiliary random variables $\zeta=(B_0,B_1)$. Then, define
\[
\overline R_j(\widetilde h_j):=\mathbb E_{\zeta}\left[R_j(\widetilde h_j)\right],\ \  \text{for }\ j\in\{0,1\},
\qquad \text{and} \qquad
\overline R_{\mathrm{amb}}(\widetilde h_0,\widetilde h_1):=\mathbb E_{\zeta}\left[R_{\mathrm{amb}}(\widetilde h_0,\widetilde h_1)\right].
\]
\begin{theorem}[Upper bound for the randomized learner]
\label{thm:randomized_upper_bound}
Assume that $\mathcal H$ has finite VC dimension $d_{\mathcal H}$, and let $\delta\in(0,1)$. On the event
$\mathcal E_n$, let $\epsilon_j(n_j,\delta)$ be as in
Theorem~\ref{thm:upper_bound}. Let $(\widehat h_0,\widehat h_1)$ be any
solution of \eqref{eq:empirical_problem} with slack parameters
$\epsilon_0/2$ and $\epsilon_1/2$, and let
$(\widetilde h_0,\widetilde h_1)$ be the randomized learner defined in
\eqref{random_learner}. Then, conditional on $\mathcal E_n$, with
probability at least $1
-
2\delta
-
\frac{\delta}{\mathbb P(\mathcal E_n)}
-
\frac{
2\exp\left(-c n\min\{\pi_0,\pi_1\}\right)
}{
\mathbb P(\mathcal E_n)
}$ with respect to the training sample $S^n$, we have
\[
\overline R_j(\widetilde h_j)\leq \alpha_j,\quad j\in\{0,1\},
\qquad\text{and}\qquad
\overline{\mathcal E}_{\mathrm{amb}}(\widetilde h_0,\widetilde h_1)
\leq
C_{\alpha,\pi}
\sqrt{
\frac{
d_{\mathcal H}\log(en)+\log(1/\delta)
}{
n
}}.
\]
Here, $
\overline{\mathcal E}_{\mathrm{amb}}(\widetilde h_0,\widetilde h_1)
:=
\overline R_{\mathrm{amb}}(\widetilde h_0,\widetilde h_1)
-
R_{\mathrm{amb}}^*$, and \(C_{\alpha,\pi}>0\) is a constant depending only on
\(\alpha_0,\alpha_1,\pi_0\), and \(\pi_1\).
\end{theorem}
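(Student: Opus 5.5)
We will work on the intersection of the good event from Theorem~\ref{thm:upper_bound} with a balance event on the class counts. Since $\widetilde h_j=\widehat h_j$ with probability $q_j$ and $\widetilde h_j=\mathbf 1$ otherwise, and $R_j(\mathbf 1)=0$, averaging over $\zeta$ gives
\[
\overline R_j(\widetilde h_j)=q_j R_j(\widehat h_j).
\]
On the event of Theorem~\ref{thm:upper_bound} we have $R_j(\widehat h_j)\le \alpha_j+\epsilon_j(n_j,\delta)$. Hence $\overline R_j(\widetilde h_j)\le \alpha_j$ by the choice $q_j=\alpha_j/(\alpha_j+\epsilon_j)$, and this part is immediate.

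For the ambiguity risk we use the independence of $B_0$ and $B_1$ and expand:
\[
\overline R_{\mathrm{amb}}
=q_0q_1R_{\mathrm{amb}}(\widehat h_0,\widehat h_1)
+q_0(1-q_1)P_X(\widehat h_0=1)
+(1-q_0)q_1P_X(\widehat h_1=1)
+(1-q_0)(1-q_1).
\]
Bounding each probability in the last three terms by $1$ and using $q_0q_1\le 1$, we get
\[
\overline R_{\mathrm{amb}}\le R_{\mathrm{amb}}(\widehat h_0,\widehat h_1)+(1-q_0)+(1-q_1),
\]
because $1-q_0q_1\le (1-q_0)+(1-q_1)$. Moreover $1-q_j=\epsilon_j/(\alpha_j+\epsilon_j)\le \epsilon_j(n_j,\delta)/\alpha_j$. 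Combining this with the excess bound of Theorem~\ref{thm:upper_bound} yields
\[
\overline{\mathcal E}_{\mathrm{amb}}\le C\sqrt{\frac{d_{\mathcal H}\log(en)+\log(1/\delta)}{n}}+\sum_j \frac{\epsilon_j(n_j,\delta)}{\alpha_j}.
\]

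The remaining step, which is the main obstacle, is to convert the class-wise rate $\epsilon_j(n_j,\delta)$ into the rate in $n$. This requires $n_j\ge c'\pi_j n$ with high probability. We apply a multiplicative Chernoff bound to $n_j\sim\mathrm{Binomial}(n,\pi_j)$:
\[
\mathbb P\bigl(n_j<\pi_j n/2\bigr)\le \exp(-c\,n\pi_j)\le \exp(-c\,n\min\{\pi_0,\pi_1\}).
\]
A union bound over $j$ gives total failure probability $2\exp(-cn\min\{\pi_0,\pi_1\})$ unconditionally. Conditioning on $\mathcal E_n$ multiplies this by at most $1/\mathbb P(\mathcal E_n)$, which accounts for the extra term in the probability. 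On the balance event, $\log(en_j)\le \log(en)$ and $1/n_j\le 2/(\pi_j n)$, so
\[
\epsilon_j(n_j,\delta)\le C\sqrt{2/\pi_j}\,\sqrt{\frac{d_{\mathcal H}\log(en)+\log(1/\delta)}{n}}.
\]
Taking a union bound with the failure event of Theorem~\ref{thm:upper_bound} gives the stated probability. The constant is then $C_{\alpha,\pi}=C\bigl(1+\sum_j \sqrt{2/\pi_j}/\alpha_j\bigr)$, which depends only on $\alpha_0,\alpha_1,\pi_0,\pi_1$.
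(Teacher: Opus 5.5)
Your proposal is correct and follows the paper's proof essentially step for step. Both arguments use the identity $\overline R_j(\widetilde h_j)=q_jR_j(\widehat h_j)$, the bound $\overline R_{\mathrm{amb}}\le R_{\mathrm{amb}}(\widehat h_0,\widehat h_1)+(1-q_0)+(1-q_1)$ with $1-q_j\le \epsilon_j(n_j,\delta)/\alpha_j$, and the balance event $n_j\ge \pi_j n/2$, combined by a union bound with the event of Theorem~\ref{thm:upper_bound}. The only difference is cosmetic: you get the ambiguity bound by expanding exactly using independence of $B_0,B_1$, whereas the paper uses the pointwise bound $R_{\mathrm{amb}}(\widetilde h_0,\widetilde h_1)\le R_{\mathrm{amb}}(\widehat h_0,\widehat h_1)+\mathbf 1\{B_0=0\}+\mathbf 1\{B_1=0\}$, which does not need independence.
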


\subsection{Minimax Lower Bound}

To derive a matching minimax lower bound for the problem \eqref{eq:population_problem}, we first specify the class of distributions over which the minimax problem is formulated.

\begin{definition}[Class of distributions]\label{def:class_distributions}
Fix a hypothesis class $\mathcal H$ with finite VC dimension $d_{\mathcal H}$. For $\alpha_0,\alpha_1\in(0,1)$, let $
\mathcal P_{\mathcal H}(\alpha_0,\alpha_1,\pi_0,\pi_1)$ denote the class of distributions $P$ on $\mathcal X\times\{0,1\}$ such that $
P(Y=0)=\pi_0, P(Y=1)=\pi_1
$, and for which the problem \eqref{eq:population_problem} is feasible; namely, there exist $h_0,h_1\in\mathcal H$ such that
\[
R_0(h_0)\le \alpha_0,
\qquad
R_1(h_1)\le \alpha_1.
\]
\end{definition}

We next specify the class of learning rules considered in the minimax formulation. In line with the randomized learner used in the upper bound, we allow the learning rule to use internal randomness. The constraints are required to hold in expectation with respect to this internal randomness, with high probability over the training sample.


\begin{definition}[Approximate randomized $(\alpha_0,\alpha_1)$-learner]\label{def:learners}
Let $r_0,r_1\ge 0$ and $0<\delta<1$. Let
$\mathcal H^+=\mathcal H\cup\{\mathbf 1\}$, where $\mathbf 1$ denotes the
constant-one classifier. For every $n\ge 1$, we call a randomized learning rule
\[
(S^n,\zeta)\longmapsto
(\widetilde h_0,\widetilde h_1)\in \mathcal H^+\times \mathcal H^+
\]
an \textbf{$(r_0,r_1,\delta)$-approximate randomized
$(\alpha_0,\alpha_1)$-learner} if the auxiliary randomness $\zeta$ is independent of $S^n$, and for every distribution
$P\in\mathcal P_{\mathcal H}(\alpha_0,\alpha_1,\pi_0,\pi_1)$,
\[
\mathbb P_{S^n\sim P^n}
\left(
\overline R_0(\widetilde h_0)\le \alpha_0+r_0,\;
\overline R_1(\widetilde h_1)\le \alpha_1+r_1
\right)
\ge 1-\delta .
\]
\end{definition}

\begin{figure}[t]
    \centering
    \includegraphics[height=0.3\textheight]{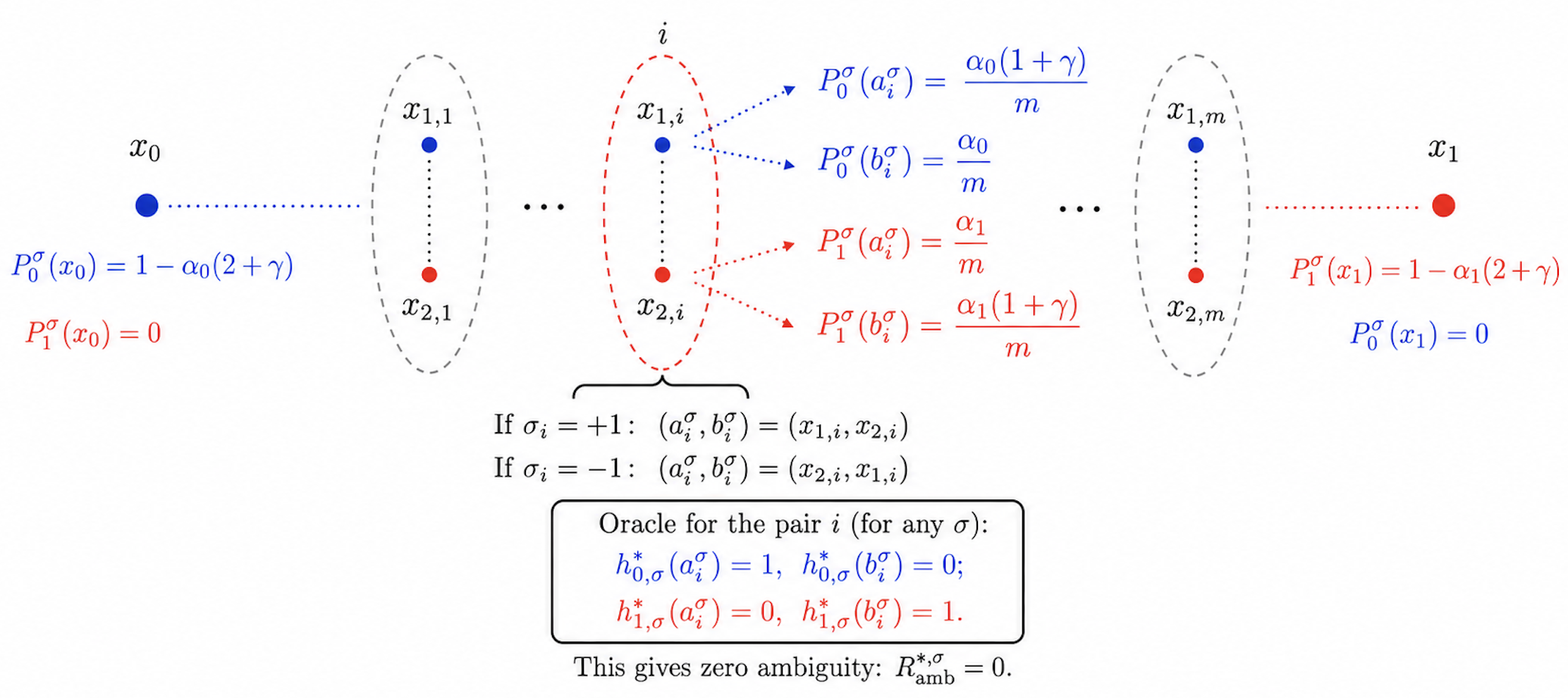}
    \caption{Illustration of the lower bound construction in Theorem \ref{thm:minimax_lower_bound_randomized}. Here, $m=\left\lfloor \frac{d_{\mathcal H}-2}{2} \right\rfloor \sim d_{\mathcal H}$. The unknown sign vector $\sigma\in \{-1,+1\}^m$ determines, in each pair $(x_{1,i},x_{2,i})$, which point is $a_i^{\sigma}$ and which is $b_i^{\sigma}$. The oracle accepts $a_i^{\sigma}$ for class $0$  and $b_i^{\sigma}$ for class $1$, satisfying $R_0=\alpha_0$ and $R_1=\alpha_1$ with zero ambiguity. Since the mass gap is only of order $\gamma \sim \sqrt{d_{\mathcal{H}}/n}$, the different values of $\sigma$ are difficult to distinguish; errors in recovering $\sigma$, together with the feasibility constraints, force overlap between the two regions, yielding an $\Omega\left(\sqrt{d_{\mathcal H}/n}\right)$ excess ambiguity lower bound.}
    \label{fig:minimax_construction}
\end{figure}
\begin{theorem}
\label{thm:minimax_lower_bound_randomized}
Fix a hypothesis class $\mathcal H$ with finite VC dimension
$d_{\mathcal H}\ge 4$, and suppose that the constant-one classifier
$\mathbf 1$ belongs to $\mathcal H$. Let
$\alpha_0,\alpha_1\in(0,\tfrac{49}{100})$ and
$0<\delta<1/6$. Suppose that \(n\) is large enough so that $
n\ge c\, d_{\mathcal H}$ for a sufficiently large constant \(c>0\). Let \(r_0,r_1\ge 0\) satisfy $
    \frac{r_0}{\alpha_0}
    +
    \frac{r_1}{\alpha_1}
    \le
    c_0\sqrt{\frac{d_{\mathcal H}}{n}}$
for a sufficiently small constant \(c_0>0\). Then, for every randomized
$(r_0,r_1,\delta)$-approximate
$(\alpha_0,\alpha_1)$-learner
$
(S^n,\zeta)
\longmapsto
(\widetilde h_0,\widetilde h_1)
\in
\mathcal H\times\mathcal H$, there exists a distribution
$P\in\mathcal P_{\mathcal H}(\alpha_0,\alpha_1,\tfrac12,\tfrac12)$
such that
\[
\mathbb P_{S^n\sim P^n}
\left(
\overline{\mathcal E}_{\mathrm{amb}}
(\widetilde h_0,\widetilde h_1)
\ge
c_1\sqrt{\frac{d_{\mathcal H}}{n}}
\right)
\ge
c_2,
\]
where \(c_1,c_2>0\) are universal constants depending only on \(\alpha_0\) and \(\alpha_1\).
\end{theorem}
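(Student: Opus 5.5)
We will prove the bound with a standard Assouad-type packing construction built on a shattered set, following the picture in Figure~\ref{fig:minimax_construction}. Since $d_{\mathcal H}\ge 4$, we fix a set of $d_{\mathcal H}$ points shattered by $\mathcal H$. We organize it as $m=\lfloor (d_{\mathcal H}-2)/2\rfloor$ pairs $(x_{1,i},x_{2,i})$ together with two anchor points $z_0,z_1$.

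For each sign vector $\sigma\in\{-1,+1\}^m$ we define $P_\sigma$ with $\pi_0=\pi_1=\tfrac12$ and the following class-conditionals. $P_0$ puts most of its mass on $z_0$, and mass $\tfrac{1}{m}(\beta_0\pm\gamma)$ on the two points of pair $i$. The point receiving the larger mass, $a_i^\sigma$, is chosen according to $\sigma_i$. $P_1$ is symmetric: it puts most of its mass on $z_1$, and its heavier point in pair $i$ is $b_i^\sigma$, the other point of the pair.

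The levels $\beta_j$ are tuned so that the oracle is exactly feasible. The oracle sets $C_0=\{z_0\}\cup\{a_i^\sigma\}$ and $C_1=\{z_1\}\cup\{b_i^\sigma\}$. These are disjoint, and we tune so that $R_0=\alpha_0$, $R_1=\alpha_1$. Because the points are shattered, these sets are realizable in $\mathcal H$. Hence $P_\sigma\in\mathcal P_{\mathcal H}(\alpha_0,\alpha_1,\tfrac12,\tfrac12)$ and $R^*_{\mathrm{amb}}=0$. Taking $\alpha_j<49/100$ leaves room for the anchors to carry the remaining mass.

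Next we reduce the excess ambiguity to sign recovery. Fix a learner output $(\widetilde h_0,\widetilde h_1)$ and consider $\mathbb E_\zeta$ of the indicators on each pair. Let $u_i,v_i\in[0,1]$ be the expected inclusion of $a_i^\sigma,b_i^\sigma$ in $C_0$, and $u_i',v_i'$ the same for $C_1$.

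The constraint $\overline R_0\le\alpha_0+r_0$ forces class~$0$ to cover enough $P_0$-mass on the pairs. Covering with the light point $b_i^\sigma$ instead of $a_i^\sigma$ costs extra mass of order $\gamma/m$ per wrongly chosen pair. To make up the budget, $\widetilde h_0$ must either include the heavy point or include both points of the pair. The same holds for class~$1$.

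Summing over pairs, we aim for an inequality of the form
\[
\overline R_{\mathrm{amb}}\;\gtrsim\;\frac{\gamma}{m}\sum_{i=1}^m \mathbf 1\{\text{pair } i \text{ ``misoriented''}\}\;-\;C(r_0+r_1).
\]
Here ``misoriented'' is defined through the expected inclusion weights: the learner puts weight at least $1/2$ on the wrong orientation in at least one class. The term $r_0+r_1$ is absorbed using $r_0/\alpha_0+r_1/\alpha_1\le c_0\sqrt{d/n}$ with $c_0$ small relative to $\gamma$.

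The delicate point is the accounting under randomization and overlap. Including both points in $C_0$ is allowed, but it creates ambiguity only if $C_1$ also includes one of them. I would handle this with a pairwise LP-style inequality. Within pair $i$, the class-$0$ mass deficit plus the class-$1$ mass deficit, plus a constant times the ambiguity mass, is at least $(\gamma/m)\cdot\mathbf 1\{\text{misoriented}\}$. Summing over $i$ then uses that the global slack is only $r_j$. I expect this step to be the main obstacle. In particular, the anchors must not let the learner shift deficits between pairs for free. To prevent this, I would make anchor mass forced: excluding $z_j$ costs more than the total budget.

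We then lower-bound sign recovery via Assouad. Neighbouring $\sigma,\sigma'$, differing in one coordinate, give $\mathrm{KL}(P_\sigma^n\|P_{\sigma'}^n)\lesssim n\gamma^2/m$. Choosing $\gamma=c\sqrt{m/n}$ keeps this bounded, which requires $n\ge c\,d_{\mathcal H}$ so that all masses are valid. Assouad's lemma then gives, for some $\sigma$, an expected number of misoriented pairs of at least $m/4$. This yields an expected excess of order $\gamma\asymp\sqrt{d_{\mathcal H}/n}$.

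To convert the expectation into a constant-probability statement, we combine two facts. The per-sample excess is bounded by a constant times $\gamma$ when restricted to the pairs. The learner's guarantee fails with probability at most $\delta<1/6$. A Paley–Zygmund or reverse-Markov argument, applied on the good event, then gives $\mathbb P(\overline{\mathcal E}_{\mathrm{amb}}\ge c_1\sqrt{d/n})\ge c_2$.
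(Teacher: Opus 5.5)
\textbf{Gap in the key reduction.} The step you flag as the crux fails as formulated. You add the raw class-$0$ and class-$1$ mass deficits of a pair and absorb $C(r_0+r_1)$ at the end. When $\alpha_0\neq\alpha_1$ this cannot work.

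Take a pair in which $\widetilde h_0$ accepts both points and $\widetilde h_1$ accepts neither. With your masses $(\beta_j\pm\gamma)/m$, $\beta_j=\alpha_j+\gamma$, the class-$0$ deficit is $-(\beta_0-\gamma)/m$ and the class-$1$ deficit is $(\beta_1+\gamma)/m$. There is no overlap. So the left-hand side of your pairwise inequality equals $(\alpha_1-\alpha_0+2\gamma)/m<0$ once $\alpha_0-\alpha_1>2\gamma$, whatever ``misoriented'' means. For $\alpha_0=\alpha_1$ your version is fine up to constants.

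Globally, put every pair in this pattern. This satisfies the summed budget $\sum_i(\mathrm{def}_0^i+\mathrm{def}_1^i)\le r_0+r_1$ with zero ambiguity while every pair is wrong. Hence no argument that uses only the sum of the two constraints can succeed.

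\textbf{The fix.} Weight class $j$ by $1/\alpha_j$. This is why the hypothesis is on $r_0/\alpha_0+r_1/\alpha_1$, and it is exactly the paper's normalization giving $(1+\gamma)\overline M_0-\overline E_0\le mr_0/\alpha_0$. In the paper's notation, with bars denoting $\mathbb E_\zeta$, set $\Delta_0^i:=(1+\gamma)(1-\bar A_i)-\bar B_i$ and $\Delta_1^i:=(1+\gamma)(1-\bar D_i)-\bar C_i$. The pointwise bounds $B_i\le B_iD_i+(1-D_i)$, $C_i\le A_iC_i+(1-A_i)$, $B_iD_i\ge B_i+D_i-1$ and $A_iC_i\ge A_i+C_i-1$ give
\[
\Delta_0^i+\Delta_1^i+2\overline O_i\ge\tfrac{\gamma}{2}\mathbf 1\{\text{pair } i \text{ misoriented}\},
\]
where ``misoriented'' means $\bar A_i\le\frac12$, $\bar D_i\le\frac12$, $\bar B_i\ge\frac12$ or $\bar C_i\ge\frac12$. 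Summing over $i$ and using $\sum_i\Delta_j^i\le mr_j/\alpha_j$ on the feasibility event gives the reduction you want.

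You also need this broad definition for Assouad. Under the narrower reading (weight at least $\frac12$ on the swapped pattern), the both/neither output above is misoriented for neither sign, so $\ell_i(\cdot,+1)+\ell_i(\cdot,-1)\ge 1$ fails.

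\textbf{Two smaller corrections.} First, the anchor worry is moot. With $P_0(z_1)=P_1(z_0)=0$, excluding $z_j$ only adds class-$j$ deficit, so its term is simply dropped. Shifting deficits between pairs is harmless once the per-pair inequalities are summed against the global budget.

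Second, in the last step the excess restricted to the pairs is not $O(\gamma)$. The feasible output $(\mathbf 1,\mathbf 1)$ has ambiguity of order $\alpha_0+\alpha_1$ there, so Paley--Zygmund on the excess is unavailable. Apply reverse Markov to the misorientation count instead, which lies in $[0,m]$. From $\mathbb E_{\Pi^\sigma}\#\{\text{mis}\}\ge m/4$ you get $\Pi^\sigma(\#\{\text{mis}\}\ge m/20)\ge 4/19>1/6>\delta$. On the intersection with the feasibility event, the weighted reduction gives excess at least $c_\alpha(\gamma/80-\frac12(r_0/\alpha_0+r_1/\alpha_1))\gtrsim\sqrt{d_{\mathcal H}/n}$ for small $c_0$.

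\textbf{Comparison with the paper.} With these repairs your route is a legitimate and somewhat cleaner alternative. Assouad applied to the $\zeta$-averaged inclusion probabilities works for every $m\ge1$. It therefore avoids the paper's split into a Fano/Gilbert--Varshamov case ($d_{\mathcal H}\ge18$) and a separate Le Cam case. Because your indicators are deterministic functions of $S^n$, it also needs no randomized version of Fano's lemma.
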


\begin{remark}
Theorem \ref{thm:minimax_lower_bound_randomized} applies to the class of randomized
\((r_0,r_1,\delta)\)-approximate learners whenever $
    \frac{r_0}{\alpha_0}+\frac{r_1}{\alpha_1}
    \le
    c_0\sqrt{\frac{d_{\mathcal H}}{n}}$ for a sufficiently small constant $c_0>0$. This class contains the exact-feasible case \(r_0=r_1=0\). It also contains
deterministic learners as the special case in which the auxiliary randomness is
degenerate. Thus the lower bound is robust: even after allowing both
randomization and constraint slack of the same order as the target statistical rate, the ambiguity
excess risk remains at least of order $
    \sqrt{\frac{d_{\mathcal H}}{n}}$. 
\end{remark}

The intuition behind the proof of the lower bound is illustrated in Figure~\ref{fig:minimax_construction}. The construction consists of a collection of paired points, with the number of pairs proportional to the VC dimension $d_{\mathcal H}$. For each pair, an unknown sign determines a small perturbation of the class-conditional probabilities, arranged in opposite directions for the two classes. If these signs were known, an oracle could select complementary points for the two class-specific regions, satisfy both error constraints, and achieve zero ambiguity. However, the perturbations are chosen small enough that, from $n$ observations, the learner cannot reliably identify the correct choice across all pairs. Under the class-conditional feasibility requirements, such errors necessarily translate into overlap between the two prediction regions, and hence into nonzero ambiguity. Choosing the perturbation at the statistical indistinguishability scale then yields an excess ambiguity of order $\sqrt{d_{\mathcal H}/n}$.


    

\section{Surrogate Formulation and Statistical Guarantees}
The formulation developed so far is based on binary-valued classifiers and the $0$--$1$ loss. Although this formulation provides a natural statistical description of the ambiguity minimization problem, directly optimizing the associated empirical objective is generally computationally difficult. In particular, the indicator losses appearing in both the class-conditional constraints and the ambiguity objective are discontinuous and typically lead to nonconvex combinatorial optimization problems. We therefore replace the binary hypotheses by real-valued score functions and the indicator losses by surrogate losses. In addition to yielding more tractable optimization problems, surrogate losses incorporate the distance of each observation from the corresponding decision boundary, thereby encouraging classifiers with larger margins and improved robustness.

Let $\mathcal F$ be a class of measurable functions
$f:\mathcal X\to\mathbb R$. Each $f\in\mathcal F$ induces the binary
classifier $
    h_f(x):=\mathbf 1\{f(x)\geq 0\}.$ Accordingly, for a pair $(f_0,f_1)\in\mathcal F\times\mathcal F$, the point
$x\in\mathcal X$ is declared ambiguous whenever $
    f_0(x)\geq 0$ and 
    $f_1(x)\geq 0.$ Thus, the induced $0$--$1$ class-conditional and ambiguity losses are defined by $\ell_j^{0\text{-}1}(f_j;x):=\mathbf 1\{f_j(x)<0\}$, $j\in\{0,1\}$, and $\ell_{\mathrm{amb}}^{0\text{-}1}(f_0,f_1;x):=\mathbf 1\{f_0(x)\geq 0,\ f_1(x)\geq 0\}$.
    \begin{definition}[Surrogate loss]\label{def:surrogate_loss}
Assume that there exists $B>0$ such that
$\sup_{f\in\mathcal F}\sup_{x\in\mathcal X}|f(x)|\le B$.
A function $\phi:\mathbb R\to \mathbb R_{+}$ is called an
$L_\phi$-Lipschitz surrogate loss over $\mathcal F$ if it is
nondecreasing, satisfies $\phi(0)=1$, $
|\phi(u)-\phi(v)|\le L_\phi|u-v|$ for all $u,v\in[-B,B]$, and there exists a constant $C_\phi>1$ such that $
\sup_{|u|\le B}\phi(u)\le C_\phi.$
\end{definition}
If $\mathcal F$ is uniformly bounded on $\mathcal X$, then standard surrogate losses such as the hinge, logistic, and exponential losses satisfy all the conditions in Definition~\ref{def:surrogate_loss}.
\begin{definition}[Surrogate risks]
\label{def:surrogate_risks}
Let $(u)_+:=\max\{u,0\}$. For $(f_0,f_1)\in\mathcal F\times\mathcal F$, define $R_{\phi,j}(f_j):=\mathbb E_{P_j}[\phi(-f_j(X))]$, $j\in\{0,1\}$, and the pointwise surrogate ambiguity losses by
\[
\ell_{\mathrm{amb},\phi}^{\mathrm{add}}(f_0,f_1;x)
:=
\left[\phi(f_0(x))+\phi(f_1(x))-1\right]_+,
\qquad
\ell_{\mathrm{amb},\phi}^{\mathrm{prod}}(f_0,f_1;x)
:=
\phi(f_0(x))\phi(f_1(x)).
\]
For $\diamond\in\{\mathrm{add},\mathrm{prod}\}$, define $
R_{\mathrm{amb},\phi}^{\diamond}(f_0,f_1)
:=
\mathbb E_{P_X}\!\left[
\ell_{\mathrm{amb},\phi}^{\diamond}(f_0,f_1;X)
\right]$.
\end{definition}
Since $\phi$ is nondecreasing and $\phi(0)=1$, these surrogate risks upper-bound their $0$--$1$ counterparts: $R_j(h_{f_j})\leq R_{\phi,j}(f_j)$ for $j\in\{0,1\}$, and $R_{\mathrm{amb}}(h_{f_0},h_{f_1})\leq R_{\mathrm{amb},\phi}^{\diamond}(f_0,f_1)$ for $\diamond\in\{\mathrm{add},\mathrm{prod}\}$.

The additive formulation is convex whenever $\phi$ is convex and
$\mathcal F$ admits a convex parameterization. However, it may assign a
large loss even when only one classifier accepts the point, since a
sufficiently large positive score can outweigh a negative score from the
other classifier. In contrast, ambiguity occurs only when both classifiers
accept the same point, and the multiplicative formulation assigns a smaller
penalty whenever at least one score is negative. It is therefore better
aligned with the ambiguity event, although it is generally nonconvex.

For $\diamond\in\{\mathrm{add},\mathrm{prod}\}$, consider the population
optimization problem
\begin{align}\label{surrogate-population}
\underset{(f_0,f_1)\in\mathcal F\times\mathcal F}{\operatorname{minimize}}
&\quad
R_{\mathrm{amb},\phi}^{\diamond}(f_0,f_1)\nonumber\\
\operatorname{subject\ to}
&\quad
R_{\phi,0}(f_0)\leq \alpha_0,
\qquad
R_{\phi,1}(f_1)\leq \alpha_1.
\end{align}
We denote a solution to \eqref{surrogate-population} by $(f_0^{\diamond,*},f_1^{\diamond,*})$ and its optimal value by $R_{\mathrm{amb},\phi}^{\diamond,*}$.
The corresponding surrogate excess ambiguity risk is
\[
\mathcal E_{\mathrm{amb},\phi}^{\diamond}(f_0,f_1)
:=
R_{\mathrm{amb},\phi}^{\diamond}(f_0,f_1)
-
R_{\mathrm{amb},\phi}^{\diamond,*}.
\]
We next introduce the empirical counterpart of
\eqref{surrogate-population}, obtained by replacing the
population surrogate risks with their empirical analogues and allowing
small slacks in the class-conditional constraints. Recall that $n_j$ denotes
the number of samples from class $j$. For $n_j>0$, define
\[
\widehat R_{\phi,j}(f_j)
:=
\frac{1}{n_j}
\sum_{i:Y_i=j}
\phi\!\left(-f_j(X_i)\right),
\qquad j\in\{0,1\}.
\]
For $\diamond\in\{\mathrm{add},\mathrm{prod}\}$, define the empirical
surrogate ambiguity risk by
\[
\widehat R_{\mathrm{amb},\phi}^{\diamond}(f_0,f_1)
:=
\frac{1}{n}\sum_{i=1}^{n}
\ell_{\mathrm{amb},\phi}^{\diamond}(f_0,f_1;X_i).
\]
Let $\epsilon_0,\epsilon_1\geq0$ be slack parameters. For
$\diamond\in\{\mathrm{add},\mathrm{prod}\}$, the empirical surrogate
learner is defined as any solution
$(\widehat f_0,\widehat f_1)\in\mathcal F\times\mathcal F$
to
\begin{align}\label{empirical_surrogate}
\underset{(f_0,f_1)\in\mathcal F\times\mathcal F}{\operatorname{minimize}}
&\quad
\widehat R_{\mathrm{amb},\phi}^{\diamond}(f_0,f_1)\nonumber\\
\operatorname{subject\ to}
&\quad
\widehat R_{\phi,0}(f_0)\leq \alpha_0+\epsilon_0,
\qquad
\widehat R_{\phi,1}(f_1)\leq \alpha_1+\epsilon_1.
\end{align}
To analyze the statistical performance of the empirical solutions, we measure
the complexity of the score class $\mathcal F$ through its Rademacher
complexity.

\begin{definition}[Rademacher complexity]
\label{def:rademacher_complexity}
Let $X_1,\ldots,X_m$ be i.i.d.\ samples drawn from a distribution $Q$ on
$\mathcal X$. Define the empirical Rademacher complexity by
\[
\widehat{\mathfrak R}_m(\mathcal F)
:=
\mathbb E_{\sigma}\!\left[
\sup_{f\in\mathcal F}
\left|
\frac{1}{m}\sum_{i=1}^m \sigma_i f(X_i)
\right|
\right],
\]
where $\sigma_1,\ldots,\sigma_m$ are independent uniform
$\{-1,1\}$-valued random variables. The Rademacher complexity of
$\mathcal F$ with respect to $Q$ is defined by $
\mathfrak R_m^{Q}(\mathcal F)
:=
\mathbb E\!\left[
\widehat{\mathfrak R}_m(\mathcal F)
\right]$, where the expectation is taken with respect to the i.i.d.\ sample.
\end{definition}

\begin{assumption}
\label{ass:rademacher_complexity}
There exists a constant $B_{\mathcal F}>0$ such that $
\mathfrak R_m^{Q}(\mathcal F)
\leq
\frac{B_{\mathcal F}}{\sqrt m}$
for all distributions $Q$ on $\mathcal X$.
\end{assumption}
\begin{remark}
\label{rem:rademacher_examples}
Assumption~\ref{ass:rademacher_complexity} is satisfied by many commonly
used function classes under standard boundedness conditions. For instance,
linear predictors with bounded coefficients and neural networks with bounded
weights have Rademacher complexity of order $m^{-1/2}$ when the input domain
is bounded.
\end{remark}

For $\diamond\in\{\mathrm{add},\mathrm{prod}\}$, define
$\kappa_{\mathrm{add}}:=1$, $\kappa_{\mathrm{prod}}:=C_\phi$,
$M_{\mathrm{add}}:=2C_\phi$, and $M_{\mathrm{prod}}:=C_\phi^2$.
For $m\geq1$ and $\delta\in(0,1)$, let
\[
\epsilon_{\phi}(m,\delta)
:=
\frac{4L_\phi B_{\mathcal F}}{\sqrt m}
+
C_\phi\sqrt{\frac{\log(6/\delta)}{2m}},
\qquad
\epsilon_{\mathrm{amb},\phi}^{\diamond}(n,\delta)
:=
C\left(
\frac{\kappa_\diamond L_\phi B_{\mathcal F}}{\sqrt n}
+
M_\diamond\sqrt{\frac{\log(6/\delta)}{n}}
\right),
\]
where $C>0$ is a universal constant.
\begin{theorem}
\label{thm:empirical_surrogate_upper_bound}
Suppose that Definition~\ref{def:surrogate_loss} and
Assumption~\ref{ass:rademacher_complexity} hold. Fix
$\diamond\in\{\mathrm{add},\mathrm{prod}\}$. Let $
\mathcal E_n:=\{n_0>0,\ n_1>0\}$, and, on $\mathcal E_n$, let
$(\widehat f_0^\diamond,\widehat f_1^\diamond)$ be any solution of the
empirical problem \eqref{empirical_surrogate} with slack parameters $
\epsilon_j
=
\epsilon_{\phi}(n_j,\delta)$ for $j\in\{0,1\}$. Then, conditional on
$\mathcal E_n$, with probability at least $
1-\frac{2\delta}{3}-\frac{\delta}{3\mathbb P(\mathcal E_n)}$,
we have
\[
R_{\phi,j}(\widehat f_j^\diamond)
\leq
\alpha_j+2\epsilon_{\phi}(n_j,\delta),
\quad j\in\{0,1\},
\qquad\text{and}\qquad
\mathcal E_{\mathrm{amb},\phi}^{\diamond}
(\widehat f_0^\diamond,\widehat f_1^\diamond)
\leq
2\epsilon_{\mathrm{amb},\phi}^{\diamond}(n,\delta).
\]
Consequently, $
R_j(h_{\widehat f_j^\diamond})
\leq
\alpha_j
+
2\epsilon_{\phi}(n_j,\delta)$ for $j\in\{0,1\}$.
\end{theorem}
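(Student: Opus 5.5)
The plan is the standard constrained-ERM argument: establish uniform deviation bounds for the three surrogate risks, then use feasibility of the population optimum for the empirical problem.

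\textbf{Step 1 (class-conditional concentration).} Conditional on the labels $(Y_1,\dots,Y_n)$ with $n_0,n_1>0$, the class-$j$ features $\{X_i:Y_i=j\}$ are i.i.d.\ from $P_j$. The loss $x\mapsto\phi(-f(x))$ takes values in $[0,C_\phi]$. We apply McDiarmid's inequality to $\sup_{f\in\mathcal F}|\widehat R_{\phi,j}(f)-R_{\phi,j}(f)|$, symmetrize, and use the Talagrand contraction lemma for the $L_\phi$-Lipschitz map $u\mapsto\phi(-u)$ on $[-B,B]$. The absolute-value form of contraction gives a factor $2L_\phi$; together with symmetrization this yields the constant $4L_\phi B_{\mathcal F}/\sqrt{n_j}$ by Assumption~\ref{ass:rademacher_complexity} with $Q=P_j$. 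Hence, with conditional probability at least $1-\delta/3$ for each $j$,
\[
\sup_{f\in\mathcal F}\bigl|\widehat R_{\phi,j}(f)-R_{\phi,j}(f)\bigr|\le\epsilon_\phi(n_j,\delta).
\]
Integrating over the labels, each event fails with probability at most $\delta/3$ conditional on $\mathcal E_n$. (Alternatively, if the bookkeeping requires it, we bound the unconditional failure probability and divide by $\mathbb P(\mathcal E_n)$.)

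\textbf{Step 2 (ambiguity concentration).} We run the same argument on all $n$ samples from $P_X$ for the class $\{x\mapsto\ell^\diamond_{\mathrm{amb},\phi}(f_0,f_1;x)\}$. For the additive loss, $(\cdot)_+$ is 1-Lipschitz and the loss is bounded by $2C_\phi$. We bound the Rademacher complexity of $\phi\circ f_0+\phi\circ f_1$ by the sum of the two, giving $\lesssim L_\phi B_{\mathcal F}/\sqrt n$. For the product loss, we write
\[
\phi(f_0)\phi(f_1)-\phi(g_0)\phi(g_1)=\phi(f_0)\bigl(\phi(f_1)-\phi(g_1)\bigr)+\phi(g_1)\bigl(\phi(f_0)-\phi(g_0)\bigr),
\]
so the loss is $C_\phi L_\phi$-Lipschitz in each coordinate and bounded by $C_\phi^2$. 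This explains $\kappa_{\mathrm{prod}}=C_\phi$ and $M_{\mathrm{prod}}=C_\phi^2$. Bounding the Rademacher complexity of this pair-indexed class is the \emph{main obstacle}: scalar contraction does not apply directly to a function of two scores. I would first use the vector-contraction inequality (Maurer) with the coordinatewise Lipschitz bound, which gives $\mathfrak R\le \sqrt2\,\kappa_\diamond L_\phi(\mathfrak R(\mathcal F)+\mathfrak R(\mathcal F))$ up to constants. If that is unavailable, I would fall back on the polarization identity $ab=\tfrac14[(a+b)^2-(a-b)^2]$ combined with scalar contraction on bounded ranges. Either route is absorbed into the universal constant $C$. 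This event has probability at least $1-\delta/3$, and it does not depend on the labels at all.

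\textbf{Step 3 (ERM argument).} On the intersection of the three events, the population solution $(f_0^{\diamond,*},f_1^{\diamond,*})$ satisfies
\[
\widehat R_{\phi,j}(f_j^{\diamond,*})\le\alpha_j+\epsilon_\phi(n_j,\delta),
\]
so it is feasible for \eqref{empirical_surrogate}. Optimality of the empirical solution then gives $\widehat R^\diamond_{\mathrm{amb},\phi}(\widehat f)\le\widehat R^\diamond_{\mathrm{amb},\phi}(f^*)$. Adding and subtracting the population risks yields
\[
\mathcal E^\diamond_{\mathrm{amb},\phi}(\widehat f)\le 2\epsilon^\diamond_{\mathrm{amb},\phi}(n,\delta).
\]
For the constraints, $R_{\phi,j}(\widehat f_j)\le \widehat R_{\phi,j}(\widehat f_j)+\epsilon_\phi\le\alpha_j+2\epsilon_\phi(n_j,\delta)$. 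The $0$--$1$ consequence follows from the pointwise inequality $\mathbf 1\{f<0\}\le\phi(-f)$, which holds since $\phi$ is nondecreasing with $\phi(0)=1$. A union bound over the three events, with the appropriate conditioning on $\mathcal E_n$ as in Theorem~\ref{thm:upper_bound}, gives the stated probability $1-\tfrac{2\delta}{3}-\tfrac{\delta}{3\mathbb P(\mathcal E_n)}$.
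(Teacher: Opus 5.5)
Your proposal is correct and follows essentially the same route as the paper. The paper also proceeds by class-conditional uniform deviation (contraction plus the Rademacher deviation inequality, then the law of total probability over the label counts), a Maurer vector-contraction bound for the pair-indexed ambiguity class with the same constants $\kappa_\diamond$ and $M_\diamond$, and the feasibility-of-$f^{\diamond,*}$ ERM argument with a union bound giving $1-\tfrac{2\delta}{3}-\tfrac{\delta}{3\mathbb P(\mathcal E_n)}$; the only cosmetic difference is that you treat the additive loss by scalar contraction plus subadditivity, whereas the paper uses vector contraction for both losses.
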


\section{Optimization and Computational Guarantees}
\label{sec:optimization}
In this section, we describe the optimization procedure used in our experiments to approximately solve the empirical problem \eqref{empirical_surrogate}. First we assume the two classifiers $f_0$ and $f_1$ are parameterized by $\theta_0$ and $\theta_1$, respectively, where $\theta_0, \theta_1 \in \Theta \subseteq \R^d$. For
$\diamond \in \{\mathrm{add},\mathrm{prod}\}$, The problem can then be rewritten as follows:
\begin{equation}
\begin{aligned}
\underset{(\theta_0,\theta_1)\in\Theta\times\Theta}{\operatorname{minimize}}
\quad &
\widehat R_{\mathrm{amb},\phi}^{\diamond}
\bigl(f_{\theta_0},f_{\theta_1}\bigr) \\
\text{subject to}
\quad &
\widehat R_{\phi,0}\bigl(f_{\theta_0}\bigr)
    \leq \alpha_0+\epsilon_0,\quad 
\widehat R_{\phi,1}\bigl(f_{\theta_1}\bigr)
    \leq \alpha_1+\epsilon_1 .
\end{aligned}
\label{empirical_surrogate_parametric}
\end{equation}
We further define $g(\theta_0,\theta_1):= \max\{ \widehat R_{\phi,0}\bigl(f_{\theta_0}\bigr) - \alpha_0 - \epsilon_0,  \widehat R_{\phi,1}\bigl(f_{\theta_1}\bigr)- \alpha_1 - \epsilon_1 \}$, so that the two constraints can be written compactly as
\(g(\theta_0,\theta_1)\leq 0\). To approximately solve this constrained problem, we employ the
one-projection algorithm of \cite{NIPS2012_c52f1bd6}. The algorithm first
applies gradient descent--ascent to the associated Lagrangian problem
\[
\min_{(\theta_0,\theta_1)\in\Theta\times\Theta}
\max_{\lambda\geq 0}
\left\{
\widehat R_{\mathrm{amb},\phi}^{\diamond}
\bigl(f_{\theta_0},f_{\theta_1}\bigr)
+\lambda g(\theta_0,\theta_1)
\right\}.
\]
After \(T\) iterations, it performs a single projection of the parameters
onto the feasible set. The complete procedure is presented in
Algorithm \ref{algorithm: SGD with One Projection}. Although our experiments use multilayer perceptrons (MLPs) with both
the additive and product ambiguity losses, the following guarantee
applies to the additive formulation when the objective and constraints
are convex in the parameters.

\begin{theorem}\label{thm:optimization}
Consider $\diamond=\mathrm{add}$ and write
$\theta=(\theta_0,\theta_1)$.
Let $B=\{\theta:\|\theta\|_2\leq1\}$ and
$K=\{\theta:g(\theta)\leq0\}$.
Assume that $K\subseteq B$, that there exists
$\theta\in\operatorname{int}(B)$ with $g(\theta)<0$,
and that $
\widehat{\theta}^\star=(\widehat{\theta}_0^\star,\widehat{\theta}_1^\star)
\in\operatorname*{arg\,min}_{\theta\in K}
\widehat R_{\mathrm{amb},\phi}^{\mathrm{add}}
(f_{\theta_0},f_{\theta_1})$ satisfies $\widehat{\theta}^\star\in\operatorname{int}(B)$.
Assume that $g$ and each mapping
$\theta\mapsto\ell_{\mathrm{amb},\phi}^{\mathrm{add}}
(f_{\theta_0},f_{\theta_1};X_i)$
are convex on a neighborhood of $B$.
For every $\theta\in B$ and $i\in\{1,\ldots,n\}$, suppose
\[
\sup_{v\in\partial_\theta\ell_{\mathrm{amb},\phi}^{\mathrm{add}}(f_{\theta_0},f_{\theta_1};X_i)}
\|v\|_2\leq G_1,\quad
\sup_{u\in\partial g(\theta)}\|u\|_2\leq G_2,\quad
|g(\theta)|\leq C_g,\quad
\inf_{\theta:\,g(\theta)=0}\inf_{u\in\partial g(\theta)}
\|u\|_2\geq\rho>0.
\]
For $\delta\in(0,1)$, define
$
c_\delta:=
\sqrt{G_1^2+C_g^2+
4\left(1+\ln\frac{2}{\delta}\right)G_1^2},
\mu:=\frac{G_2^2}{c_\delta\sqrt T},
\eta:=\frac{\mu}{2G_2^2}$. Then Algorithm~\ref{algorithm: SGD with One Projection} outputs
$\widehat\theta=(\widehat\theta_0,\widehat\theta_1)\in K$ such that,
conditional on the training sample, with probability at least
$1-\delta$
\[
\widehat R_{\mathrm{amb},\phi}^{\mathrm{add}}
(f_{\widehat\theta_0},f_{\widehat\theta_1})
-
\widehat R_{\mathrm{amb},\phi}^{\mathrm{add}}
(f_{\widehat{\theta}_0^\star},f_{\widehat{\theta}_1^\star})
\lesssim
\frac{1}{\sqrt T}
\left[
\left(1+\frac{G_1^2}{\rho^2}\right)c_\delta
+\frac{G_1^2G_2^2}{\rho^2c_\delta}
+G_1\sqrt{\ln\frac{2}{\delta}}
\right].
\]
Consequently, under the assumptions of Theorem \ref{thm:empirical_surrogate_upper_bound}, taking
\[
T\gtrsim
\frac{1}{
\bigl(\epsilon_{\mathrm{amb},\phi}^{\mathrm{add}}(n,\delta)\bigr)^2}
\left[
\left(1+\frac{G_1^2}{\rho^2}\right)c_\delta
+\frac{G_1^2G_2^2}{\rho^2c_\delta}
+G_1\sqrt{\ln\frac{2}{\delta}}
\right]^2
\]
preserves its feasibility guarantees and yields its ambiguity
rate up to a constant factor, with an additional failure
probability of at most $\delta$.
\end{theorem}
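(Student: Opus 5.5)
The plan is to treat Algorithm~\ref{algorithm: SGD with One Projection} as the stochastic-gradient-with-one-projection scheme of \cite{NIPS2012_c52f1bd6}, applied to the convex program $\min_{\theta\in K}F(\theta)$ with $F(\theta):=\widehat R_{\mathrm{amb},\phi}^{\mathrm{add}}(f_{\theta_0},f_{\theta_1})=\frac1n\sum_i \ell_{\mathrm{amb},\phi}^{\mathrm{add}}(f_{\theta_0},f_{\theta_1};X_i)$. The only randomness conditional on the sample is the uniform index $i_t$. This makes the stochastic subgradient $v_t\in\partial_\theta\ell_{\mathrm{amb},\phi}^{\mathrm{add}}(\cdot;X_{i_t})$ an unbiased estimate of a subgradient of $F$, with $\|v_t\|_2\le G_1$. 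All hypotheses of their theorem transfer directly:
\begin{itemize}[leftmargin=*]
\item convexity of $F$ and of $g$ near $B$;
\item $K\subseteq B$;
\item Slater's condition through the interior point with $g<0$;
\item bounded subgradients of $g$ ($G_2$) and bounded values $|g|\le C_g$;
\item the boundary lower bound $\rho$ on $\|\partial g\|$ on $\{g=0\}$.
\end{itemize}

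The argument then has three steps.

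\textbf{Step 1 (primal--dual regret).} Run projected descent on $\theta$ over $B$ and ascent on $\lambda\ge0$ for the augmented Lagrangian $\mathcal L(\theta,\lambda)=F(\theta)+\lambda g(\theta)-\frac{\mu}{2}\lambda^2$, with step size $\eta$. The standard online gradient descent--ascent regret bound gives, for every $\theta\in B$ and $\lambda\ge0$,
\[
\sum_t[\mathcal L(\theta_t,\lambda)-\mathcal L(\theta,\lambda_t)]\lesssim \frac{1}{\eta}+\eta\sum_t(G_1^2+\lambda_t^2G_2^2)+\eta\sum_t g(\theta_t)^2 + \text{martingale}.
\]
The martingale term is $\sum_t\langle \nabla F(\theta_t)-v_t,\theta_t-\theta\rangle$. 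Its increments are bounded by $4G_1$ since $\theta_t,\theta\in B$, so Azuma--Hoeffding controls it by $G_1\sqrt{T\ln(2/\delta)}$. Taking $\theta=\widehat\theta^\star$, using $g(\widehat\theta^\star)\le0$, and optimizing over $\lambda$ with the given $\mu,\eta$ yields two bounds for the average $\bar\theta_T$:
\[
F(\bar\theta_T)-F(\widehat\theta^\star)\lesssim \frac{c_\delta}{\sqrt T}+G_1\sqrt{\tfrac{\ln(2/\delta)}{T}},
\qquad
[g(\bar\theta_T)]_+\lesssim \frac{\text{same}}{\text{const}}\,\text{-type bound}.
\]
The $-\frac{\mu}{2}\lambda^2$ term is what keeps $\lambda_t$ bounded by roughly $C_g/\mu$, so that the $\lambda_t^2G_2^2$ terms are absorbed. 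I will check that this term is responsible for the $c_\delta$ combination, including the $4(1+\ln\frac2\delta)G_1^2$ piece.

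\textbf{Step 2 (single projection).} Set $\widehat\theta=\Pi_K(\bar\theta_T)$, which lies in $K$ by construction. The key geometric lemma of \cite{NIPS2012_c52f1bd6} states that under the $\rho$ condition and Slater's condition,
\[
\|\Pi_K(\theta)-\theta\|\le [g(\theta)]_+/\rho .
\]
Naively this gives $F(\widehat\theta)-F(\widehat\theta^\star)\le \text{opt gap}+G_1[g(\bar\theta_T)]_+/\rho$. That would produce only a $G_1/\rho$ factor, whereas the stated bound has $G_1^2/\rho^2$. The refined argument instead uses the KKT multiplier: since $\widehat\theta^\star\in\mathrm{int}(B)$, the optimal dual satisfies $\lambda^\star\le G_1/\rho$. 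Plugging $\lambda=\lambda^\star$-scale values into the regret inequality gives a violation bound sharp enough to yield the $(1+G_1^2/\rho^2)c_\delta$ and $G_1^2G_2^2/(\rho^2c_\delta)$ terms. I expect this step to be the main obstacle: balancing the dual regularization $\mu$ against the bound on $\lambda^\star$ and the projection distance so that the constants match. I would follow their Theorem's proof essentially line by line, substituting $G_1$ for the gradient bound of $F$.

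\textbf{Step 3 (consequence).} Choose $T$ so that the optimization error is at most $\epsilon_{\mathrm{amb},\phi}^{\mathrm{add}}(n,\delta)$; this gives exactly the stated $T$ requirement. Feasibility: $\widehat\theta\in K$ means $\widehat\theta$ satisfies the empirical constraints of \eqref{empirical_surrogate}. The proof of Theorem~\ref{thm:empirical_surrogate_upper_bound} uses only empirical feasibility, via uniform convergence, to get $R_{\phi,j}\le\alpha_j+2\epsilon_\phi$. For ambiguity, it uses $\widehat R(\widehat f)\le\widehat R(f^\star)$. With an approximate minimizer this inequality holds up to an additive $\epsilon_{\mathrm{amb},\phi}^{\mathrm{add}}$, so the excess surrogate ambiguity becomes at most $3\epsilon_{\mathrm{amb},\phi}^{\mathrm{add}}(n,\delta)$, the same rate up to a constant factor. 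A union bound adds failure probability $\delta$.
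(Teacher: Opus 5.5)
Your proposal is correct and takes the same route as the paper. The paper simply invokes the one-projection primal--dual guarantee of \cite{NIPS2012_c52f1bd6} for the optimization bound, and then proceeds exactly as in your Step 3: it chooses $T$ so the optimization error is at most $\epsilon_{\mathrm{amb},\phi}^{\mathrm{add}}(n,\delta)$, gets the $3\epsilon_{\mathrm{amb},\phi}^{\mathrm{add}}(n,\delta)$ excess from the uniform-deviation chain, reads feasibility off $\widehat\theta\in K$, and takes a union bound. Your Steps 1--2 only unpack that cited result, with one correction: the $G_1^2/\rho^2$ factor is not a sharpening but the product of the projection factor $G_1/\rho$ with $\lambda^\star\le G_1/\rho$, and $\lambda^\star$ enters because the violation bound on $[g(\bar\theta_T)]_+$ requires $F(\bar\theta_T)-F(\widehat\theta^\star)\ge-\lambda^\star[g(\bar\theta_T)]_+$, so that bound is not available from the regret inequality in Step 1 alone.
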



\begin{algorithm2e}[t]
\small
\DontPrintSemicolon
\caption{\sffamily{SGD with One Projection}}
\label{algorithm: SGD with One Projection}
\textbf{Input:}
$\theta_0^0=\theta_1^0=0$, $\lambda_0=0$,
stepsize $\eta$, $\mu>0$,
and $\diamond\in\{\mathrm{add},\mathrm{prod}\}$.\\

\For{$t=0,\ldots,T-1$}{
    Sample $X_t$ uniformly from $\{X_i\}_{i=1}^n$ and Choose
$G_t^\diamond\in
\partial_{(\theta_0,\theta_1)}
\ell_{\mathrm{amb},\phi}^\diamond
(f_{\theta_0^t},f_{\theta_1^t};X_t)$
and
$u_t\in
\partial_{(\theta_0,\theta_1)}g(\theta_0^t,\theta_1^t)$
    \\
    $\theta'=(\theta_0^t,\theta_1^t)
    -\eta G_t^\diamond-\eta\lambda_tu_t$\\
    $(\theta_0^{t+1},\theta_1^{t+1})
    =\theta'/\max\{1,\|\theta'\|_2\}$\\
    $\lambda_{t+1}
    =[(1-\mu\eta)\lambda_t+\eta g(\theta_0^t,\theta_1^t)]_+$
}

$\widehat\theta=$ $\ell_2$ projection of
$\frac1T\sum_{t=0}^{T-1}(\theta_0^t,\theta_1^t)$
onto the constraint set
$\{(\theta_0,\theta_1):g(\theta_0,\theta_1)\leq0\}$.\\
\textbf{Output:} $\widehat\theta$
\end{algorithm2e}

\section{Experiments}
In this section, we evaluate the proposed approach to classification with abstention on several datasets. We consider both the additive and product surrogate formulations, using linear models and multilayer perceptrons (MLPs) to parameterize the class-specific score functions. We compare our approach with the method of \cite{lei2014classification}, which constructs classification regions by thresholding nonparametric estimates of conditional class probabilities. We report the ambiguity risk and class-conditional errors on held-out test data.

\subsection{Vertebral Column Dataset}

The Vertebral Column dataset \citep{barreto2005vertebral} contains 310 patients described by six biomechanical features, with 100 normal and 210 pathological cases. We use stratified splits of 217 training and 93 test observations, standardizing features using training-sample statistics. We compare a three-layer MLP with hinge loss under the additive and product formulations against Lei’s method \citep{lei2014classification}, with class-conditional error levels set to $\alpha_0=\alpha_1=0.05$. We report the mean and standard deviation of test ambiguity risk and class-conditional errors over ten runs.

As shown in Table~\ref{tab:vertebral-column}, both surrogate formulations
achieve lower mean ambiguity risk than Lei's method \citep{lei2014classification}, with class-conditional
errors close to the target level of $0.05$. The product formulation
achieves the lowest mean ambiguity risk ($0.2591$, compared with $0.3086$
for the additive formulation and $0.3505$ for Lei's method \citep{lei2014classification}), with slightly
higher mean class-conditional errors than the additive formulation.

\begin{table}[t]
\centering
\caption{Test performance on the Vertebral Column dataset \citep{barreto2005vertebral} with target
class-conditional error levels $\alpha_0=\alpha_1=0.05$.
We compare Lei's method \citep{lei2014classification} with three-layer MLPs trained using hinge loss
under the additive and product surrogate formulations.
Values are means $\pm$ standard deviations over ten runs.
$R_0$ and $R_1$ denote class-conditional errors, and
$R_{\mathrm{amb}}$ denotes ambiguity risk.}
\label{tab:vertebral-column}
\begin{tabular}{lccc}
\toprule
Method & $R_0$ & $R_1$ & $R_{\mathrm{amb}}$ \\
\midrule
Lei \citep{lei2014classification}
& $0.0333 \pm 0.0351$
& $0.0524 \pm 0.0290$
& $0.3505 \pm 0.1152$ \\
MLP, hinge, additive
& $0.0400 \pm 0.0344$
& $0.0524 \pm 0.0375$
& $0.3086 \pm 0.1153$ \\
MLP, hinge, product
& $0.0467 \pm 0.0358$
& $0.0540 \pm 0.0383$
& $0.2591 \pm 0.1029$ \\
\bottomrule
\end{tabular}
\end{table}

\subsection{MAGIC Gamma Telescope Dataset}

The MAGIC Gamma Telescope dataset \citep{bock2004magic} contains 19,020 simulated
atmospheric-shower events, each described by ten continuous features.
The task is to distinguish gamma showers (12,332 observations) from
hadronic showers (6,688 observations). We use a stratified split of
13,314 training and 5,706 test observations and compare three-layer
MLPs with hinge loss under the additive and product formulations against
Lei's method \citep{lei2014classification}, with $\alpha_0=\alpha_1=0.05$.

As shown in Table~\ref{tab:magic}, both formulations achieve lower
mean ambiguity risk than Lei's method \citep{lei2014classification}. The product formulation
performs better than the additive formulation in terms of mean ambiguity
risk ($0.4332$ versus $0.5107$), while keeping both mean
class-conditional errors below $0.05$.

\begin{table}[t]
\centering
\caption{Test performance on the MAGIC Gamma Telescope dataset \citep{bock2004magic}
with target class-conditional error levels $\alpha_0=\alpha_1=0.05$.
We compare Lei's method \citep{lei2014classification} with three-layer MLPs trained using hinge
loss under the additive and product surrogate formulations.
Values are means $\pm$ standard deviations over ten runs.
$R_0$ and $R_1$ denote class-conditional errors, and
$R_{\mathrm{amb}}$ denotes ambiguity risk.}
\label{tab:magic}
\begin{tabular}{lccc}
\toprule
Method & $R_0$ & $R_1$ & $R_{\mathrm{amb}}$ \\
\midrule
Lei \citep{lei2014classification}
& $0.0501 \pm 0.0046$
& $0.0485 \pm 0.0064$
& $0.5966 \pm 0.0197$ \\
MLP, hinge, additive
& $0.0316 \pm 0.0204$
& $0.0520 \pm 0.0232$
& $0.5107 \pm 0.0889$ \\
MLP, hinge, product
& $0.0418 \pm 0.0091$
& $0.0493 \pm 0.0092$
& $0.4332 \pm 0.0263$ \\
\bottomrule
\end{tabular}
\end{table}

\subsection{BCI Competition IV Dataset 2b}

The BCI Competition IV Dataset 2b \citep{leeb2008bci2b} contains EEG
recordings from nine subjects performing left- and right-hand
motor-imagery tasks. We report results for Subject 4, whose original
competition data comprise 400 training trials and 320 evaluation
trials. Our experiments use a stratified 70\%/30\% training--test
split, with preprocessing and feature parameters estimated from
the training data. We compare Lei's method with linear and MLP
models using hinge or squared hinge loss under the additive and
product formulations, with $\alpha_0=\alpha_1=0.05$.

As shown in Table~\ref{tab:bci-subject4}, the linear model with
hinge loss and the product formulation achieves a low mean ambiguity
risk of $0.0077$, with mean class-conditional errors of $0.0505$
and $0.0450$, approximately meeting the prescribed thresholds.
This compares favorably with the mean ambiguity risk of $0.2293$
achieved by the method of Lei \citep{lei2014classification}.

\begin{table}[t]
\centering
\caption{Test performance for Subject 4 of the BCI Competition IV
Dataset 2b \citep{leeb2008bci2b} with target class-conditional error levels
$\alpha_0=\alpha_1=0.05$. We compare Lei's method \citep{lei2014classification} with linear and
MLP models using hinge or squared hinge loss under the additive
and product formulations. Values are means $\pm$ standard deviations
over ten runs. $R_0$ and $R_1$ denote class-conditional errors,
and $R_{\mathrm{amb}}$ denotes ambiguity risk.}
\label{tab:bci-subject4}
\begin{tabular}{lccc}
\toprule
Method & $R_0$ & $R_1$ & $R_{\mathrm{amb}}$ \\
\midrule
Lei \citep{lei2014classification}
& $0.0351 \pm 0.0361$
& $0.0252 \pm 0.0163$
& $0.2293 \pm 0.4065$ \\
Linear, hinge, product
& $0.0505 \pm 0.0279$
& $0.0450 \pm 0.0147$
& $0.0077 \pm 0.0090$ \\
Linear, hinge, additive
& $0.0459 \pm 0.0242$
& $0.0450 \pm 0.0147$
& $0.0117 \pm 0.0098$ \\
MLP, hinge, product
& $0.0658 \pm 0.0321$
& $0.0396 \pm 0.0114$
& $0.0072 \pm 0.0085$ \\
MLP, hinge, additive
& $0.0523 \pm 0.0268$
& $0.0495 \pm 0.0196$
& $0.0050 \pm 0.0113$ \\
MLP, squared hinge, product
& $0.0459 \pm 0.0270$
& $0.0423 \pm 0.0113$
& $0.0158 \pm 0.0133$ \\
\bottomrule
\end{tabular}
\end{table}

\subsection{Phoneme Dataset}

Thew Phoneme dataset \citep{phoneme_openml} from OpenML contains 5,404 observations
described by five acoustic features, with the task of distinguishing
nasal from oral vowel sounds. We use a stratified 70\%/30\%
training--test split and compare MLP models with hinge loss under
the additive and product formulations against the method of
Lei \citep{lei2014classification}, with $\alpha_0=\alpha_1=0.05$.

As shown in Table~\ref{tab:phoneme}, the additive formulation
achieves mean class-conditional errors below both the prescribed
thresholds and those of Lei's method \citep{lei2014classification}, while also yielding
a lower mean ambiguity risk ($0.5292$ versus $0.5606$). The product formulation further reduces mean ambiguity risk to
$0.4482$, although its mean $R_0=0.0551$ slightly exceeds the
target level.

\begin{table}[t]
\centering
\caption{Test performance on the Phoneme dataset \citep{phoneme_openml} with target
class-conditional error levels $\alpha_0=\alpha_1=0.05$.
We compare the method of Lei \citep{lei2014classification} with MLP models trained
using hinge loss under the additive and product formulations.
Values are means $\pm$ standard deviations over ten runs.
$R_0$ and $R_1$ denote class-conditional errors, and
$R_{\mathrm{amb}}$ denotes ambiguity risk.}
\label{tab:phoneme}
\begin{tabular}{lccc}
\toprule
Method & $R_0$ & $R_1$ & $R_{\mathrm{amb}}$ \\
\midrule
Lei \citep{lei2014classification}
& $0.0531 \pm 0.0122$
& $0.0540 \pm 0.0069$
& $0.5606 \pm 0.0257$ \\
MLP, hinge, product
& $0.0551 \pm 0.0204$
& $0.0479 \pm 0.0151$
& $0.4482 \pm 0.0341$ \\
MLP, hinge, additive
& $0.0387 \pm 0.0308$
& $0.0462 \pm 0.0170$
& $0.5292 \pm 0.0607$ \\
\bottomrule
\end{tabular}
\end{table}

\subsection{Sensorless Drive Diagnosis Dataset}

The Sensorless Drive Diagnosis dataset \citep{bator2013sensorless} contains 48 features
extracted from motor-current signals under different operating
conditions. We consider classes 6 and 9, yielding 10,638 observations
equally divided between the two classes, with 7,446 training and
3,192 test observations. We compare MLP models with hinge loss
under the additive and product formulations against the method
of Lei \citep{lei2014classification}, with $\alpha_0=\alpha_1=0.05$.

As shown in Table~\ref{tab:sensorless}, the product formulation
achieves the lowest mean ambiguity risk ($0.0166$, compared with
$0.0644$ for the additive formulation and $0.2833$ for
Lei \citep{lei2014classification}). Both formulations yield mean $R_0$ below the
prescribed threshold, while their mean $R_1$ slightly exceeds it.

\begin{table}[t]
\centering
\caption{Test performance on the Sensorless Drive Diagnosis dataset \citep{bator2013sensorless}
(classes 6 versus 9) with target class-conditional error levels
$\alpha_0=\alpha_1=0.05$. We compare the method of Lei \citep{lei2014classification}
with MLP models trained using hinge loss under the additive and
product formulations. Values are means $\pm$ standard deviations
over ten runs. $R_0$ and $R_1$ denote class-conditional errors,
and $R_{\mathrm{amb}}$ denotes ambiguity risk.}
\label{tab:sensorless}
\begin{tabular}{lccc}
\toprule
Method & $R_0$ & $R_1$ & $R_{\mathrm{amb}}$ \\
\midrule
Lei \citep{lei2014classification}
& $0.0538 \pm 0.0093$
& $0.0505 \pm 0.0059$
& $0.2833 \pm 0.1541$ \\
MLP, hinge, product
& $0.0437 \pm 0.0154$
& $0.0543 \pm 0.0116$
& $0.0166 \pm 0.0123$ \\
MLP, hinge, additive
& $0.0441 \pm 0.0203$
& $0.0527 \pm 0.0127$
& $0.0644 \pm 0.0527$ \\
\bottomrule
\end{tabular}
\end{table}

\subsection{Synthetic Gaussian Data}

We consider a binary classification problem in dimension $d=10$,
with class-conditional distributions
\[
X\mid Y=0 \sim \mathcal{N}(\mathbf{0}_{10},I_{10}),
\qquad
X\mid Y=1 \sim \mathcal{N}(0.5\mathbf{1}_{10},2I_{10}).
\]
We generate 10,000 observations, equally divided between the two
classes, and use a stratified split of 7,000 training and 3,000
test observations. We compare MLP models using hinge and squared
hinge losses under the additive and product formulations against
the method of Lei \citep{lei2014classification}, with $\alpha_0=\alpha_1=0.05$.

As shown in Table~\ref{tab:gaussian}, the additive formulation with
hinge loss achieves mean class-conditional errors below both
prescribed thresholds and those of Lei's method \citep{lei2014classification}, while
also yielding a lower mean ambiguity risk ($0.5056$ versus
$0.6038$). The product formulation with squared hinge loss
reduces mean ambiguity risk to $0.4415$, with mean $R_0=0.0512$
slightly above the target and mean $R_1=0.0479$ below it.

\begin{table}[t]
\centering
\caption{Test performance on the 10-dimensional Gaussian dataset
with $X\mid Y=0\sim\mathcal{N}(\mathbf{0}_{10},I_{10})$ and
$X\mid Y=1\sim\mathcal{N}(0.5\mathbf{1}_{10},2I_{10})$,
using 7,000 training observations and target error levels
$\alpha_0=\alpha_1=0.05$. Values are means $\pm$ standard
deviations over ten runs. $R_0$ and $R_1$ denote class-conditional
errors, and $R_{\mathrm{amb}}$ denotes ambiguity risk.}
\label{tab:gaussian}
\begin{tabular}{lccc}
\toprule
Method & $R_0$ & $R_1$ & $R_{\mathrm{amb}}$ \\
\midrule
Lei \citep{lei2014classification}
& $0.0485 \pm 0.0051$
& $0.0510 \pm 0.0104$
& $0.6038 \pm 0.0207$ \\
MLP, hinge, product
& $0.0611 \pm 0.0092$
& $0.0519 \pm 0.0103$
& $0.4089 \pm 0.0392$ \\
MLP, hinge, additive
& $0.0473 \pm 0.0173$
& $0.0490 \pm 0.0195$
& $0.5056 \pm 0.0921$ \\
MLP, squared hinge, product
& $0.0512 \pm 0.0109$
& $0.0479 \pm 0.0098$
& $0.4415 \pm 0.0441$ \\
MLP, squared hinge, additive
& $0.0655 \pm 0.0193$
& $0.0496 \pm 0.0174$
& $0.4565 \pm 0.0754$ \\
\bottomrule
\end{tabular}
\end{table}

\section{Conclusion}

We studied classification with abstention under separate class-conditional
error constraints, treating abstention as the price of controlling both
types of error. We characterized the distribution-free minimax rate of
excess ambiguity risk up to logarithmic factors. To accommodate
models such as neural networks, we introduced additive and product
surrogate formulations and derived statistical guarantees for constraint
violations and excess surrogate ambiguity risk. We further characterized
the computational complexity of a stochastic primal-dual algorithm in
the convex setting. Experiments on synthetic and real-world datasets showed that the proposed
approach can substantially reduce ambiguity compared with the plug-in
method of Lei~\cite{lei2014classification}, while achieving
class-conditional errors close to the prescribed levels. Extending the computational guarantees to
nonconvex models and relating excess surrogate ambiguity to excess
0--1 ambiguity remain directions for future work.

\newpage
\bibliography{references}
\bibliographystyle{ICLR}

\newpage
\appendix

\section*{Appendix}

\section{Proof of Theorem~\ref{thm:upper_bound} [Upper bound]}
\label{app:proof-main}

\begin{proof}
Let $\mathcal E_n:=\{n_0>0,n_1>0\}$. For $j\in\{0,1\}$, define
\[
Z_j
=
\sup_{h_j\in\mathcal H}
\left|
\widehat R_j(h_j)-R_j(h_j)
\right|.
\]

For every integer $m\ge 1$, conditional on the event $\{n_j=m\}$, the
observations $\{X_i:Y_i=j\}$ are i.i.d. from $P_j$. Hence, there exists
a universal constant $C>0$ such that
\[
\mathbb P\left(
Z_j
\le
C
\sqrt{
\frac{
d_{\mathcal{H}}\log(em)+\log(1/\delta)
}{
m
}
}
\;\middle|\;
n_j=m
\right)
\ge 1-\delta .
\]
Increasing $C$ if necessary, define the random radius
\[
\epsilon_j(n_j,\delta)
=
C
\sqrt{
\frac{
d_{\mathcal{H}}\log(en_j)+\log(1/\delta)
}{
n_j
}
},
\qquad n_j\ge 1.
\]
Then, for each $j\in\{0,1\}$,
\[
\begin{aligned}
\mathbb P\left(
Z_j>\frac{\epsilon_j(n_j,\delta)}{2}
\;\middle|\;
\mathcal E_n
\right)
&=
\sum_{m=1}^{n-1}
\mathbb P\left(
Z_j>\frac{\epsilon_j(m,\delta)}{2}
\;\middle|\;
n_j=m
\right)
\mathbb P\left(
n_j=m
\;\middle|\;
\mathcal E_n
\right)\le
\sum_{m=1}^{n-1}
\delta\,
\mathbb P\left(
n_j=m
\;\middle|\;
\mathcal E_n
\right)
=
\delta .
\end{aligned}
\]
Define the event
\[
\mathcal A
:=
\mathcal E_n
\cap
\left\{
Z_j
\le
\frac{\epsilon_j(n_j,\delta)}{2}
\text{ for } j=0,1
\right\}.
\]
Therefore, by a union bound, we have $
    \mathbb P(\mathcal A\mid \mathcal E_n)
    \ge
    1-2\delta$. On the event $\mathcal A$, since
$(\widehat h_0,\widehat h_1)$ is feasible for the empirical problem,
i.e.,
\[
     \widehat R_j(\widehat h_j)
    \leq
    \alpha_j+\frac{\epsilon_j(n_j,\delta)}{2},
    \qquad j\in\{0,1\},
\]
we have
\begin{align}\label{constraints_bounds}
    R_j(\widehat h_j)
    \leq
    \widehat R_j(\widehat h_j)+Z_j
    \leq
    \alpha_j+\epsilon_j(n_j,\delta),
    \qquad j\in\{0,1\}.
\end{align}

It remains to control the ambiguity term. Let
\[
\mathcal{G}
=
\left\{
g_{h_0,h_1}:x\mapsto h_0(x)h_1(x)
:
h_0,h_1\in\mathcal H
\right\}.
\]
For a function $g\in \mathcal{G}$, we write
$P_X g:=\mathbb E_{X\sim P_X}[g(X)]$ and
$P_n g:=n^{-1}\sum_{i=1}^n g(X_i)$. Now observe that
\[
R_{\mathrm{amb}}(h_0,h_1)
=
P_X\bigl(h_0(X)=1,\ h_1(X)=1\bigr)
=
P_X g_{h_0,h_1}.
\]
Similarly, $
\widehat R_{\mathrm{amb}}(h_0,h_1)
=
P_n g_{h_0,h_1}$. Therefore,
\begin{align}\label{equivalence_ambigiuty}
\sup_{h_0,h_1\in\mathcal H}
\left|
R_{\mathrm{amb}}(h_0,h_1)
-
\widehat R_{\mathrm{amb}}(h_0,h_1)
\right|
=
\sup_{g\in\mathcal G}
\left|
P_X g - P_n g
\right|.
\end{align}

By Lemma~\ref{lemma:VC_bound}, the class $\mathcal G$ satisfies
$\operatorname{VC}(\mathcal G)\leq C d_{\mathcal{H}}$, for a universal
constant $C>0$. Let
\[
\Delta_{\mathrm{amb}}(n,\delta)
:=
C
\sqrt{
\frac{
d_{\mathcal{H}}\log(en)+\log(1/\delta)
}{
n
}
},
\]
and define the event
\[
\mathcal B
:=
\left\{
\sup_{h_0,h_1\in\mathcal H}
\left|
R_{\mathrm{amb}}(h_0,h_1)
-
\widehat R_{\mathrm{amb}}(h_0,h_1)
\right|
\leq
\Delta_{\mathrm{amb}}(n,\delta)
\right\}.
\]
Then, by the uniform convergence bound over the class $\mathcal G$ and
\eqref{equivalence_ambigiuty}, we obtain $
    \mathbb P(\mathcal B^c)\leq \delta $. Hence, conditional on $\mathcal E_n$,
\[
    \mathbb P(\mathcal B^c\mid \mathcal E_n)
    =
    \frac{\mathbb P(\mathcal B^c\cap \mathcal E_n)}
    {\mathbb P(\mathcal E_n)}
    \leq
    \frac{\delta}{\mathbb P(\mathcal E_n)}.
\]

Now we consider the event $\mathcal A\cap\mathcal B$. Since
$(h_0^*,h_1^*)$ is feasible for the population problem,
\[
    R_j(h_j^*)\leq \alpha_j,
    \qquad j\in\{0,1\}.
\]
Using $\mathcal A$, we obtain
\[
    \widehat R_j(h_j^*)
    \leq
    R_j(h_j^*)+Z_j
    \leq
    \alpha_j+\frac{\epsilon_j(n_j,\delta)}{2},
    \qquad j\in\{0,1\}.
\]
Thus $(h_0^*,h_1^*)$ is feasible for the empirical problem. Therefore,
by the optimality of $(\widehat h_0,\widehat h_1)$ in the empirical
problem,
\[
    \widehat R_{\mathrm{amb}}(\widehat h_0,\widehat h_1)
    \leq
    \widehat R_{\mathrm{amb}}(h_0^*,h_1^*).
\]
Using $\mathcal B$, we obtain
\[
\begin{aligned}
R_{\mathrm{amb}}(\widehat h_0,\widehat h_1)
-
R_{\mathrm{amb}}(h_0^*,h_1^*)
&\leq
\left[
R_{\mathrm{amb}}(\widehat h_0,\widehat h_1)
-
\widehat R_{\mathrm{amb}}(\widehat h_0,\widehat h_1)
\right] +
\left[
\widehat R_{\mathrm{amb}}(h_0^*,h_1^*)
-
R_{\mathrm{amb}}(h_0^*,h_1^*)
\right]\\
&\leq
2\Delta_{\mathrm{amb}}(n,\delta).
\end{aligned}
\]
Since $R_{\mathrm{amb}}(h_0^*,h_1^*)=R_{\mathrm{amb}}^*$, this gives
\begin{align}\label{excess_ambiguity_bound}
    R_{\mathrm{amb}}(\widehat h_0,\widehat h_1)
    -
    R_{\mathrm{amb}}^*
    \leq
    2\Delta_{\mathrm{amb}}(n,\delta).
\end{align}

Finally,
\[
\begin{aligned}
\mathbb P(\mathcal A\cap\mathcal B\mid \mathcal E_n)
&\geq
1-\mathbb P(\mathcal A^c\mid \mathcal E_n)
  -\mathbb P(\mathcal B^c\mid \mathcal E_n)\geq
1-2\delta-\frac{\delta}{\mathbb P(\mathcal E_n)}.
\end{aligned}
\]
Therefore, conditional on $\mathcal E_n$, with probability at least $
    1-2\delta-\frac{\delta}{\mathbb P(\mathcal E_n)}$, the two class-conditional bounds in~\eqref{constraints_bounds} and the
excess ambiguity bound in~\eqref{excess_ambiguity_bound} hold
simultaneously.
\end{proof}

\section{Proof of Proposition \ref{prop:strict_feasibility_impossibility}}
\begin{proof}
Let $\mathcal X=\{a,b\}$, and define $g_0(a)=0$, $g_0(b)=1$, $g_1(a)=1$, and $g_1(b)=0$. Let $\mathcal H=\{\mathbf 1,\mathbf 0,g_0,g_1\}$. VC dimension of this class is $2$. Consider the distribution with equal class priors $\pi_0=\pi_1=1/2$ and class-conditional distributions $P_0(X=a)=\alpha_0$, $P_0(X=b)=1-\alpha_0$, $P_1(X=a)=1-\alpha_1$, and $P_1(X=b)=\alpha_1$. We have $
    R_0(g_0)=\alpha_0$ and
    $R_1(g_1)=\alpha_1$. Therefore, the pair $(g_0,g_1)$ is feasible.
Moreover,
$R_{\mathrm{amb}}(g_0,g_1)=0$. It follows that $
    R_{\mathrm{amb}}^*=0$.

The class-$0$ risks of the classifiers in $\mathcal{H}$ are
\[
    R_0(\mathbf 1)=0,
    \qquad
    R_0(g_0)=\alpha_0,
    \qquad
    R_0(g_1)=1-\alpha_0,
    \qquad
    R_0(\mathbf 0)=1.
\]
Hence, the only classifier satisfying $R_0(h_0)<\alpha_0$ is $h_0=\mathbf 1$. Similarly, the only classifier satisfying $R_1(h_1)<\alpha_1$ is $h_1=\mathbf 1$.
Thus, the only pair satisfying both strict constraints is $
    (h_0,h_1)=(\mathbf 1,\mathbf 1).$ Its ambiguity risk is $
    R_{\mathrm{amb}}(\mathbf 1,\mathbf 1)=1.$ Therefore,
$
    \mathcal E_{\mathrm{amb}}(\mathbf 1,\mathbf 1)
    =
    1-R_{\mathrm{amb}}^*
    =1,$ which proves the result.
\end{proof}

\section{Proof of Theorem~\ref{thm:randomized_upper_bound} [Upper bound for the randomized learner]}\label{app:proof-upper-bound-randomized}

\begin{proof}

Conditioning on $\mathcal E_n$, let $\mathcal B$ denote the event on which the conclusions of
Theorem~\ref{thm:upper_bound} hold for the empirical solution
$(\widehat h_0,\widehat h_1)$. Thus, we have $
\mathbb P(\mathcal B\mid \mathcal E_n)
\geq
1-2\delta-\frac{\delta}{\mathbb P(\mathcal E_n)}$. On $\mathcal B$, we have
\[
R_j(\widehat h_j)\leq \alpha_j+\epsilon_j(n_j,\delta),
\qquad j\in\{0,1\},
\]
and
\begin{align}\label{amb_bound_proof_thm2}
\mathcal E_{\mathrm{amb}}(\widehat h_0,\widehat h_1)
\leq
C
\sqrt{
\frac{
d_{\mathcal H}\log(e n)+\log(1/\delta)
}{
n
}} .
\end{align}

$\mathbb E_\zeta$ denotes expectation only with respect to the
internal randomness, with the data $S^n$ fixed. Since
the constant-one classifier has class-conditional error equal to zero,
we have
\[
\overline R_j(\widetilde h_j)
=
\mathbb E_{\zeta}
\left[
R_j\bigl(B_j\widehat h_j+(1-B_j)\mathbf 1\bigr)
\right]
=
q_jR_j(\widehat h_j).
\]
Therefore,
\[
\overline R_j(\widetilde h_j)
\leq
\frac{\alpha_j}{\alpha_j+\epsilon_j(n_j,\delta)}
\bigl(\alpha_j+\epsilon_j(n_j,\delta)\bigr)
=
\alpha_j,
\qquad j\in\{0,1\}.
\]

We next bound the ambiguity risk. For every realization of
$\zeta=(B_0,B_1)$,
\[
R_{\mathrm{amb}}(\widetilde h_0,\widetilde h_1)
\leq
R_{\mathrm{amb}}(\widehat h_0,\widehat h_1)
+
\mathbf 1\{B_0=0\}
+
\mathbf 1\{B_1=0\}.
\]
Taking expectation with respect to $\zeta$ gives
\[
\overline R_{\mathrm{amb}}(\widetilde h_0,\widetilde h_1)
\leq
R_{\mathrm{amb}}(\widehat h_0,\widehat h_1)
+
(1-q_0)+(1-q_1).
\]
Hence, 
\begin{align}\label{amb_bound_epsilon}
\overline{\mathcal E}_{\mathrm{amb}}
(\widetilde h_0,\widetilde h_1)
\leq
\mathcal E_{\mathrm{amb}}(\widehat h_0,\widehat h_1)
+
\sum_{j=0}^1(1-q_j)\leq \mathcal E_{\mathrm{amb}}(\widehat h_0,\widehat h_1)
+ \sum_{j=0}^1 \frac{\epsilon_j(n_j,\delta)}{\alpha_j}.
\end{align}

Next, define the event $
\mathcal N_n
:=
\left\{
n_j\geq \frac{\pi_j n}{2},\ j\in\{0,1\}
\right\}$. Since $n_j\sim \operatorname{Binomial}(n,\pi_j)$ and $\pi_j>0$, a
standard binomial concentration bound gives
\[
\mathbb P(\mathcal N_n^c)
\leq
2\exp\left(-c n\min\{\pi_0,\pi_1\}\right),
\]
for a universal constant $c>0$. Therefore,
\[
\mathbb P(\mathcal N_n^c\mid \mathcal E_n)
\leq
\frac{
2\exp\left(-c n\min\{\pi_0,\pi_1\}\right)
}{
\mathbb P(\mathcal E_n)
}.
\]
On $\mathcal N_n$, for each $j\in\{0,1\}$,
\[
\epsilon_j(n_j,\delta)
=
C
\sqrt{
\frac{
d_{\mathcal H}\log(e n_j)+\log(1/\delta)
}{
n_j
}}
\leq
C_{\pi}
\sqrt{
\frac{
d_{\mathcal H}\log(e n)+\log(1/\delta)
}{
n
}} .
\]
Combining the bounds \eqref{amb_bound_proof_thm2} and \eqref{amb_bound_epsilon}, on $\mathcal B\cap\mathcal N_n$,
\[
\overline{\mathcal E}_{\mathrm{amb}}
(\widetilde h_0,\widetilde h_1)
\leq
C_{\alpha,\pi}
\sqrt{
\frac{
d_{\mathcal H}\log(e n)+\log(1/\delta)
}{
n
}} .
\]
Finally, conditional on $\mathcal E_n$,
\[
\mathbb P(\mathcal B\cap\mathcal N_n\mid \mathcal E_n)
\geq
1
-
2\delta
-
\frac{\delta}{\mathbb P(\mathcal E_n)}
-
\frac{
2\exp\left(-c n\min\{\pi_0,\pi_1\}\right)
}{
\mathbb P(\mathcal E_n)
}.
\]
This concludes the proof.
\end{proof}

\section{Proof of Theorem~\ref{thm:minimax_lower_bound_randomized} [Minimax Lower bound]}
\label{app:proof-lower-bound}
We will use the following proposition to construct a well-separated packing of the parameter space.
\begin{proposition}[Gilbert--Varshamov bound]\label{Varshamov}
Let \(m\ge 8\). Then there exists a subset \(\{\sigma_0,\ldots,\sigma_M\}\) of \(\{-1,+1\}^m\) such that \(\sigma_0=(1,1,\ldots,1)\),
\[
d_H(\sigma_j,\sigma_k)\ge \frac{m}{8},
\qquad \forall\,0\le j<k\le M,
\qquad\text{and}\qquad
M\ge 2^{m/8},
\]
where \(d_H(\sigma,\sigma')=\#\{i\in[m]:\sigma(i)\neq\sigma'(i)\}\) is the Hamming distance.
\end{proposition}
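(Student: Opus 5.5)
The plan is a greedy packing combined with a volume-counting argument on the hypercube $\{-1,+1\}^m$, with the Hamming ball controlled by a binomial tail bound.

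\textbf{Greedy construction.} Start from $\sigma_0=(1,\ldots,1)$. Repeatedly add any point of $\{-1,+1\}^m$ whose Hamming distance to every point chosen so far is at least $m/8$. Stop when no such point remains, which must happen because the cube is finite. The resulting set $\{\sigma_0,\ldots,\sigma_M\}$ contains $\sigma_0$ and is pairwise $m/8$-separated by construction.

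\textbf{Covering and counting.} Because the set is maximal, every $\sigma\in\{-1,+1\}^m$ satisfies $d_H(\sigma,\sigma_j)<m/8$ for some $j$. Hence the Hamming balls $\{\sigma: d_H(\sigma,\sigma_j)< m/8\}$ cover the cube. Each such ball has cardinality at most $V:=\sum_{i\le m/8}\binom{m}{i}$, so
\[
(M+1)\,V\ \ge\ 2^m .
\]

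\textbf{Bounding the ball volume.} Let $S\sim\operatorname{Binomial}(m,1/2)$. Then $V/2^m=\mathbb P(S\le m/8)$. By Hoeffding's inequality,
\[
\mathbb P(S\le m/8)\le\exp\!\bigl(-2m(3/8)^2\bigr)=e^{-9m/32}.
\]
Therefore $M+1\ge e^{9m/32}=2^{(9/(32\ln 2))m}\ge 2^{0.4m}$. Alternatively, one could use the entropy bound $V\le 2^{mH(1/8)}$ with $H(1/8)\approx 0.544$.

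\textbf{Conclusion.} It remains to check that $2^{0.4m}-1\ge 2^{m/8}$ for all $m\ge 8$. Write $x=2^{m/8}\ge 2$. The inequality becomes $x^{3.2}-1\ge x$, which holds since $x^{3.2}\ge x^2 + x^{2}\cdot 0 \ge 2x \ge x+1$ for $x\ge 2$.

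The only delicate step is the bookkeeping between the strict inequality in the covering property and the non-strict separation condition. Maximality gives distance $<m/8$, and the ball of radius $<m/8$ is contained in the ball of radius $\le m/8$, so the bound on $V$ still applies. Beyond that, the constants have ample room.
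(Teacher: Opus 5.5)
Your proof is correct. The maximal $m/8$-separated packing seeded at $\sigma_0=(1,\ldots,1)$, the covering bound $(M+1)\sum_{i\le m/8}\binom{m}{i}\ge 2^m$, Hoeffding's bound $e^{-9m/32}$ on the binomial tail, and the final check $x^{3.2}-1\ge x$ for $x=2^{m/8}\ge 2$ all go through. The paper gives no proof of its own and simply uses this as the standard Varshamov--Gilbert lemma (Tsybakov, Lemma~2.9), whose textbook proof is this same greedy-packing-plus-Hoeffding argument.
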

We will also use the following proposition to convert a packing into a lower bound on the probability of estimation error. This proposition is a direct consequence of the Fano/Tsybakov minimax lower bound \cite[Theorem~2.5]{Tsybakov:1315296}, proved in Section \ref{proof_Auxiliary_Results}.

\begin{proposition}\label{minimax_tsybakov}
Let $\{P_\theta:\theta\in\Theta\}$ be a statistical model, and let
$S^n$ denote a sample drawn from $P_\theta^{\otimes n}$. Assume that $M \ge 2$ and that the function $\operatorname{dist}(\cdot,\cdot)$ is a semi-metric.
Furthermore, suppose that $\{\Pi_{\theta_j}\}_{\theta_j \in \Theta}$ is a family of distributions indexed over the parameter space $\Theta$, and that $\Theta$ contains elements
$\theta_0,\theta_1,\ldots,\theta_M$ such that:
\begin{enumerate}
    \item[(i)] 
   
    $\operatorname{dist}(\theta_i,\theta_j) \ge 2s > 0, \forall\, 0 \le i < j \le M$.

    \item[(ii)]
    $
    \Pi_{\theta_j} \ll \Pi_{\theta_0}, \forall\, j=1,\ldots,M,
    $ and
    $
    \frac{1}{M}\sum_{j=1}^M
D_{\mathrm{KL}}\!\left(\Pi_{\theta_j} \,\middle\|\, \Pi_{\theta_0}\right)
    \le \eta \log M$ with  $0<\eta<\frac18,
    $ where 
    $D_{\mathrm{KL}}$ denotes the KL divergence.
\end{enumerate}
Define$
L_{M,\eta}
:=
\frac{\sqrt M}{1+\sqrt M}
\left(
1-2\eta-\sqrt{\frac{2\eta}{\log M}}
\right).$ Then
\[
\inf_{\widetilde\theta}
\sup_{\theta\in\Theta}
P_\theta^{\otimes n}
\left(
\mathbb E_\zeta
\left[
d\bigl(\widetilde\theta(S^n,\zeta),\theta\bigr)
\right]
\ge
\frac{sL_{M,\eta}}{2}
\right)
\ge
\frac{L_{M,\eta}}{2-L_{M,\eta}},
\]

where the infimum is over all randomized estimators of the form $
\widetilde\theta=\widetilde\theta(S^n,\zeta)$, with $\zeta$ denoting auxiliary randomness of the estimator, independent
of the sample $S^n$.
\end{proposition}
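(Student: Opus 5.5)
The plan is to reduce the randomized estimation problem to a multiple hypothesis testing problem on an \emph{augmented} observation $(S^n,\zeta)$. Then apply the Fano/Tsybakov bound \cite[Theorem~2.5]{Tsybakov:1315296} in its testing form. Finally, convert the resulting lower bound on a probability jointly over $(S^n,\zeta)$ into a high-probability statement over $S^n$ alone about $\mathbb E_\zeta[d(\widetilde\theta,\theta)]$, using a reverse Markov argument.

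\emph{Step 1 (augmentation).} Let $Q$ be the law of $\zeta$, which is independent of $S^n$. For each $j$ consider the product measure $\widetilde\Pi_{\theta_j}:=\Pi_{\theta_j}\otimes Q$ on the pair $(S^n,\zeta)$. Absolute continuity is preserved, and by the chain rule $D_{\mathrm{KL}}(\widetilde\Pi_{\theta_j}\|\widetilde\Pi_{\theta_0})=D_{\mathrm{KL}}(\Pi_{\theta_j}\|\Pi_{\theta_0})$. Hence condition (ii) holds verbatim for the augmented family. Any randomized estimator $\widetilde\theta(S^n,\zeta)$ is a deterministic measurable function of the augmented observation.

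\emph{Step 2 (testing reduction).} Define the minimum-distance test $\psi:=\arg\min_{0\le k\le M} d(\widetilde\theta,\theta_k)$, with ties broken arbitrarily. By (i) and the triangle inequality for the semi-metric, $d(\widetilde\theta,\theta_j)<s$ forces $\psi=j$. Therefore
\[
\widetilde\Pi_{\theta_j}\bigl(d(\widetilde\theta,\theta_j)\ge s\bigr)\ge \widetilde\Pi_{\theta_j}(\psi\ne j).
\]
Tsybakov's Theorem~2.5, whose proof uses only (i), (ii) and the fact that $\psi$ is a measurable function of the observation, gives
\[
\max_j \widetilde\Pi_{\theta_j}(\psi\neq j)\ge L_{M,\eta}.
\]
Consequently there is an index $j$ with $\mathbb E_{S^n}[p_j(S^n)]\ge L_{M,\eta}$, where $p_j(S^n):=Q\bigl(d(\widetilde\theta(S^n,\zeta),\theta_j)\ge s\bigr)\in[0,1]$.

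\emph{Step 3 (reverse Markov).} Since $d\ge 0$, Markov's inequality in $\zeta$ gives $\mathbb E_\zeta[d(\widetilde\theta,\theta_j)]\ge s\,p_j(S^n)$. Write $L=L_{M,\eta}$ and $\pi:=P(p_j\ge L/2)$. Because $p_j\le 1$,
\[
L\le \mathbb E p_j\le \pi+\tfrac L2(1-\pi),
\]
so $\pi\ge (L/2)/(1-L/2)=L/(2-L)$. On the event $\{p_j\ge L/2\}$ we have $\mathbb E_\zeta[d(\widetilde\theta,\theta_j)]\ge sL/2$. Taking the supremum over $\theta\in\Theta$, which is at least the value at $\theta_j$, and then the infimum over all randomized estimators yields the claim.

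\emph{Main obstacle.} The only delicate point is justifying that Theorem~2.5 applies to randomized tests. I would handle this through the augmentation in Step 1, checking that the KL condition and the absolute continuity are unaffected. The remaining steps are elementary.
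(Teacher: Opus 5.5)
Your proposal is correct and follows essentially the same route as the paper. Both arguments augment the observation to $(S^n,\zeta)$, note that the KL divergences are unchanged under the common product factor, and apply Tsybakov's Theorem~2.5 to the augmented family. Both then finish with the reverse-Markov step at threshold $L_{M,\eta}/2$ together with $\mathbb E_\zeta[d]\ge s\,Q(S^n)$. The only difference is that you spell out the minimum-distance reduction to testing, which the paper leaves inside the citation of Theorem~2.5; this usefully makes explicit that the maximum is attained at one of the $\theta_j$.
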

We divide the proof into two cases: $
\text{(I) } d_{\mathcal H}\geq 18$ and
$\text{(II) } 4\le d_{\mathcal H}\le 17.$

\noindent\textbf{Case (I): \boldmath{$d_{\mathcal H}\geq 18$.}} Let $m=\lfloor \frac{d_{\mathcal{H}}-2}{2} \rfloor$. We choose $2m+2$ points $x_0,x_1,x_{1,1},\ldots,x_{1,m},x_{2,1},\ldots,x_{2,m}\in\mathcal X$ that are shattered by $\mathcal H$. Let $\sigma=(\sigma_1,...,\sigma_m)\in \{-1,+1\}^m$. For each $i\in[m]$, define
\[
(a_i^{\sigma},b_i^{\sigma})
:=
\begin{cases}
(x_{1,i},x_{2,i}), & \text{if } \sigma_i=1,\\
(x_{2,i},x_{1,i}), & \text{if } \sigma_i=-1.
\end{cases}
\]

\noindent \textbf{Construction of the distributions:} Fix a small number $\gamma\in(0,\frac{4}{100})$, to be chosen later as $\gamma=c\sqrt{\frac{m}{n}}$ for a sufficiently small universal constant $c>0$. For every $\sigma \in \{-1,+1\}^m$, define $P^{\sigma}$ as follows. The label prior is \[P^{\sigma}(Y=0)=P^{\sigma}(Y=1)=\frac{1}{2}.\]
The conditional distribution $P_0^\sigma := P^\sigma_{X\mid Y=0}$ is defined by $P_0^\sigma(x_0)=1-\alpha_0(2+\gamma)$, $P_0^\sigma(x_1)=0$, and, for each $i\in[m]$, 
\[
P_0^\sigma(a_i^\sigma)=\frac{\alpha_0(1+\gamma)}{m},
\qquad
P_0^\sigma(b_i^\sigma)=\frac{\alpha_0}{m}.
\]
Similarly, define $P_1^\sigma= X\mid Y=1$ by $P_1^\sigma(x_1)=1-\alpha_1(2+\gamma)$, $P_1^{\sigma}(x_0)=0$ , and, for each $i\in [m]$, 
\[
P_1^\sigma(b_i^\sigma)=\frac{\alpha_1(1+\gamma)}{m},
\qquad
P_1^\sigma(a_i^\sigma)=\frac{\alpha_1}{m}.
\]

Because $\alpha_0,\alpha_1<\frac{49}{100}$ and $\gamma<\frac{4}{100}$, all masses are nonnegative.

\noindent \textbf{Oracle pair has zero ambiguity:} Because the selected points are shattered by $\mathcal{H}$, there exist classifiers $h^*_{0,\sigma},h^*_{1,\sigma}\in \mathcal{H}$ such that
\[
h_{0,\sigma}^*(x_0)=1,\quad
h_{0,\sigma}^*(x_1)=0,\quad
h_{0,\sigma}^*(a_i^\sigma)=1,\quad
h_{0,\sigma}^*(b_i^\sigma)=0,\quad \forall i\in[m],
\]

and 

\[
h_{1,\sigma}^*(x_0)=0,\quad
h_{1,\sigma}^*(x_1)=1,\quad
h_{1,\sigma}^*(a_i^\sigma)=0,\quad
h_{1,\sigma}^*(b_i^\sigma)=1,\quad \forall i\in[m].
\]

For $h^*_{0,\sigma}$,
\[
R_0^\sigma(h_{0,\sigma}^*)=P_0^\sigma\bigl(h_{0,\sigma}^*(X)=0\bigr)
=\sum_{i=1}^{m} P_0^\sigma(b_i^\sigma)=\alpha_0.
\]
Similarly, for $h^*_{1,\sigma}$,
\[
R_1^\sigma(h_{1,\sigma}^*)=\sum_{i=1}^{m}P_1^\sigma(a_i^\sigma)=\alpha_1.
\]
Thus the pair is feasible. Moreover, we have $h_{0,\sigma}^*(x)h_{1,\sigma}^*(x)=0$, which implies that $R_{\mathrm{amb}}^{*,\sigma}=R_{\mathrm{amb}}^\sigma(h_{0,\sigma}^*,h_{1,\sigma}^*)=0.$

\noindent \textbf{Ambiguity--Hamming reduction:}
Let $
    (S^n,\zeta)\mapsto (\widetilde h_0,\widetilde h_1)
    \in \mathcal H^+\times \mathcal H^+$ be a randomized learning rule, where \(\zeta\) is independent of \(S^n\). We next define a decoding map that maps each pair \((\widetilde h_0,\widetilde h_1)\in\mathcal H^+\times\mathcal H^+\) to a sign vector \(\widehat{\sigma}(\widetilde h_0,\widetilde h_1)\in\{-1,+1\}^m\). This allows us to relate the ambiguity of \((\widetilde h_0,\widetilde h_1)\) to the Hamming distance between \(\widehat{\sigma}(\widetilde h_0,\widetilde h_1)\) and the underlying parameter \(\sigma\).

For a fixed sample \(S^n\), a fixed auxiliary random variable \(\zeta\), and a
coordinate \(i\in[m]\), define
\[
A_i^\zeta:=\widetilde h_0(a_i^\sigma),
\qquad
B_i^\zeta:=\widetilde h_0(b_i^\sigma),
\qquad
C_i^\zeta:=\widetilde h_1(a_i^\sigma),
\qquad
D_i^\zeta:=\widetilde h_1(b_i^\sigma).
\]
The oracle pattern at coordinate $i$ is $(A_i,B_i,C_i,D_i)=(1,0,0,1)$. Define the overlap count $O_i^\zeta=A_i^\zeta C_i^\zeta+B_i^\zeta D_i^\zeta.$ Let
\[
O^\zeta=\sum_{i=1}^{m}O_i^\zeta,\qquad M_0^\zeta=\sum_{i=1}^{m}(1-A_i^\zeta),\qquad M_1^\zeta=\sum_{i=1}^{m}(1-D_i^\zeta),\qquad E_0^\zeta=\sum_{i=1}^{m}B_i^\zeta,\qquad E_1^\zeta=\sum_{i=1}^{m}C_i^\zeta.
\]

We write
\[
\overline O:=\mathbb E_\zeta[O^\zeta],
\qquad
\overline M_0:=\mathbb E_\zeta[M_0^\zeta],
\qquad
\overline M_1:=\mathbb E_\zeta[M_1^\zeta],
\qquad
\overline E_0:=\mathbb E_\zeta[E_0^\zeta],
\qquad
\overline E_1:=\mathbb E_\zeta[E_1^\zeta].
\]
Consider the feasibility event
\[
\overline{\mathcal E}_\sigma
:=
\left\{
    \overline R_0^\sigma(\widetilde h_0)\le \alpha_0+r_0,
    \quad
    \overline R_1^\sigma(\widetilde h_1)\le \alpha_1+r_1
\right\}.
\]
Let $\Pi^\sigma := \left(P^\sigma\right)^{\otimes n}$ denote the distribution of the sample $S^{n}$. By the definition of an \((r_0,r_1,\delta)\)-approximate randomized
\((\alpha_0,\alpha_1)\)-learner, we have $
    \Pi^\sigma(\overline{\mathcal E}_\sigma)\ge 1-\delta$. On \(\overline{\mathcal E}_\sigma\), the constraint on $\widetilde h_0$ gives $\mathbb E_\zeta P_0^{\sigma}(h_0(X)=1)\geq 1-\alpha_0-r_0$. On the other hand, the oracle satisfies $P_0^{\sigma}(h^*_{0,\sigma}(X)=1)=1-\alpha_0$. Therefore,
\begin{align}\label{difference_mass}
    P_0^\sigma(h^*_{0,\sigma}(X)=1)
    -
    \mathbb E_\zeta P_0^\sigma(\widetilde h_0(X)=1)
    \le r_0.
\end{align}
Now, by the definitions of $M_0^\zeta$ and $E_0^\zeta$, we have $M_0^\zeta=\#\{i:\widetilde h_0(a_i^{\sigma})=0\}$ and $E_0^\zeta=\#\{i:\widetilde h_0(b_i^{\sigma})=1\}$. Compared with the oracle $h_{0,\sigma}^*$, every missed point $a_i^{\sigma}$ loses mass $\frac{\alpha_0(1+\gamma)}{m}$ , and every extra accepted point $b_i^{\sigma}$ gains mass $\frac{\alpha_0}{m}$. Hence, \eqref{difference_mass} implies that
\[
    \frac{\alpha_0}{m}
    \left(
        (1+\gamma)\overline M_0-\overline E_0
    \right)
    \le r_0,
\]
or equivalently
\begin{align}\label{eq:rand-eq1}
    (1+\gamma)\overline M_0-\overline E_0
    \le
    \frac{m r_0}{\alpha_0}.
\end{align}
Similarly, the class-\(1\) constraint gives
\begin{align}
    (1+\gamma)\overline M_1-\overline E_1
    \le
    \frac{m r_1}{\alpha_1}.
    \label{eq:rand-eq2}
\end{align}
For every \(\zeta\), we have the pointwise inequalities $
    B_i^\zeta
    \le
    B_i^\zeta D_i^\zeta+(1-D_i^\zeta)$, and $
    C_i^\zeta
    \le
    A_i^\zeta C_i^\zeta+(1-A_i^\zeta)$. Summing over \(i\) and taking expectation with respect to \(\zeta\), we obtain
\begin{align}
    \overline E_0
    \le
    \overline O+\overline M_1,
    \label{eq:rand-eq3}
\end{align}
and
\begin{align}
    \overline E_1
    \le
    \overline O+\overline M_0.
    \label{eq:rand-eq4}
\end{align}
Combining \eqref{eq:rand-eq1} with \eqref{eq:rand-eq3}, we get
\[
    (1+\gamma)\overline M_0
    \le
    \overline O+\overline M_1+\frac{m r_0}{\alpha_0}.
\]
Combining \eqref{eq:rand-eq2} with \eqref{eq:rand-eq4}, we get
\[
    (1+\gamma)\overline M_1
    \le
    \overline O+\overline M_0+\frac{m r_1}{\alpha_1}.
\]
Adding these two inequalities yields
\begin{align}
    \overline M_0+\overline M_1
    \le
    \frac{2\overline O}{\gamma}
    +
    \frac{m}{\gamma}
    \left(
        \frac{r_0}{\alpha_0}
        +
        \frac{r_1}{\alpha_1}
    \right).
    \label{eq:rand-M-bound}
\end{align}
For each realization of \(\zeta\), define the decoded sign vector $
    \widetilde\sigma
    =
    \widehat\sigma(\widetilde h_0,\widetilde h_1)
    \in\{-1,+1\}^m$ as follows: at coordinate \(i\), choose the sign
whose oracle pattern is closer to the induced pattern of
\((\widetilde h_0,\widetilde h_1)\), breaking ties arbitrarily. If \(\widetilde\sigma_i\neq\sigma_i\), then the induced pattern cannot equal
the true oracle pattern. Hence, at
least one of the following must occur: 
\[
A_i^\zeta=0,\qquad D_i^\zeta=0,\qquad A_i^\zeta C_i^\zeta+B_i^\zeta D_i^\zeta\ge 1.
\]
Therefore, for every \(\zeta\),
\[
    \mathbf 1\{\widetilde\sigma_i\neq\sigma_i\}
    \le
    (1-A_i^\zeta)+(1-D_i^\zeta)
    +
    A_i^\zeta C_i^\zeta+B_i^\zeta D_i^\zeta .
\]
Summing over \(i\) and taking expectation over \(\zeta\), we obtain
\[
    \mathbb E_\zeta
    \left[
        d_H(\widetilde\sigma,\sigma)
    \right]
    \le
    \overline M_0+\overline M_1+\overline O.
\]
Using \eqref{eq:rand-M-bound}, this gives
\[
    \mathbb E_\zeta
    \left[
        d_H(\widetilde\sigma,\sigma)
    \right]
    \le
    \frac{3\overline O}{\gamma}
    +
    \frac{m}{\gamma}
    \left(
        \frac{r_0}{\alpha_0}
        +
        \frac{r_1}{\alpha_1}
    \right).
\]
Thus,
\begin{align}\label{eq:rand-lower-bound-O}
\boxed{
    \overline O
    \ge
    \frac{\gamma}{3}
    \mathbb E_\zeta
    \left[
        d_H(\widetilde\sigma,\sigma)
    \right]
    -
    \frac{m}{3}
    \left(
        \frac{r_0}{\alpha_0}
        +
        \frac{r_1}{\alpha_1}
    \right).}
\end{align}
This is the key reduction: a large Hamming distance implies a large overlap, unless the learning rule
constraints are violated.

Now we translate the averaged overlap count into averaged ambiguity risk. Each
point among the paired points has marginal mass at least $
    \frac{c_\alpha}{m}$, where
 $c_\alpha:=\frac12\min\{\alpha_0,\alpha_1\}$.
Therefore,
\[
    \overline R_{\mathrm{amb}}^\sigma(\widetilde h_0,\widetilde h_1)
    =
    \mathbb E_\zeta
    \left[
        R_{\mathrm{amb}}^\sigma(\widetilde h_0,\widetilde h_1)
    \right]
    \ge
    \frac{c_\alpha}{m}\overline O.
\]
Combining this with \eqref{eq:rand-lower-bound-O}, we get
\begin{align}
\boxed{
    \overline R_{\mathrm{amb}}^\sigma(\widetilde h_0,\widetilde h_1)
    \ge
    c_\alpha
    \left[
        \frac{\gamma}{3m}
        \mathbb E_\zeta
        \left[
            d_H(\widetilde\sigma,\sigma)
        \right]
        -
        \frac13
        \left(
            \frac{r_0}{\alpha_0}
            +
            \frac{r_1}{\alpha_1}
        \right)
    \right].
}
    \label{eq:rand-key-reduction-amb}
\end{align}
This is the ambiguity-to-Hamming reduction.

\noindent \textbf{Packing by Gilbert–Varshamov:} By Proposition \ref{Varshamov}, there exists $\Sigma \subset \{-1,+1\}^m$ such that $|\Sigma|-1\geq 2^{m/8}$, and for every distinct $\sigma,\sigma'\in \Sigma$, we have $d_{H}(\sigma,\sigma')\geq \frac{m}{8}$.

\noindent \textbf{KL divergence bound:} We now compute KL divergence between $P^{\sigma}$ and $P^{\sigma'}$. Because the label prior, namely the marginal distribution of \(Y\), is the same for all \(\sigma\), we have
\[
D_{\mathrm{KL}}(P^\sigma\|P^{\sigma'})
=
\frac12 D_{\mathrm{KL}}(P_0^\sigma\|P_0^{\sigma'})
+
\frac12 D_{\mathrm{KL}}(P_1^\sigma\|P_1^{\sigma'}).
\]

If $\sigma$ and $\sigma'$ differ in $k$ coordinates, then, for each coordinate in which they differ, the two conditional distributions interchange the two masses
$
\frac{\alpha_j(1+\gamma)}{m}$ and $\frac{\alpha_j}{m}$. Hence, each differing coordinate contributes
\[
\frac{\alpha_0(1+\gamma)}{m}
\log
\frac{\alpha_0(1+\gamma)/m}{\alpha_0/m}
+
\frac{\alpha_0}{m}
\log
\frac{\alpha_0/m}{\alpha_0(1+\gamma)/m}
\]
to $D_{\mathrm{KL}}\!\left(P_0^\sigma \,\middle\|\, P_0^{\sigma'}\right)$. Therefore,
\[
D_{\mathrm{KL}}\!\left(P_0^\sigma \,\middle\|\, P_0^{\sigma'}\right)
=
\frac{k\alpha_0}{m}
\left[
(1+\gamma)\log(1+\gamma)
+
\log \frac{1}{1+\gamma}
\right]\leq \frac{k\alpha_0\gamma^2}{m}\leq \alpha_0\gamma^2.
\]
Consequently,
\[
D_{\mathrm{KL}}\!\left(P^\sigma \,\middle\|\, P^{\sigma'}\right)
\le
\frac{1}{2}(\alpha_0+\alpha_1)\gamma^2.
\]
For n i.i.d. samples, we have 
\[
D_{\mathrm{KL}}\!\left((P^\sigma)^{\otimes n} \,\middle\|\, (P^{\sigma'})^{\otimes n}\right)
\le
\frac{1}{2}n(\alpha_0+\alpha_1)\gamma^2 .
\]
For $\gamma=\frac{1}{16}\sqrt{\frac{m}{n}}$, we have 
\begin{align}\label{kl_bound}
D_{\mathrm{KL}}\!\left((P^\sigma)^{\otimes n} \,\middle\|\, (P^{\sigma'})^{\otimes n}\right) \leq \eta \log{|\Sigma|}.
\end{align}
where $\eta=\frac{1}{32}$.

\noindent \textbf{Reduction to randomized Tsybakov's method (Proposition \ref{minimax_tsybakov}):}
The randomized learning rule
induces the randomized estimator $
    \widetilde\sigma
    =
    \widehat\sigma(\widetilde h_0,\widetilde h_1)
    =
    \widehat\sigma(S^n,\zeta)$ of the unknown parameter \(\sigma\). We apply Proposition~\ref{minimax_tsybakov} with parameter space
\(\Theta=\Sigma\), semi-metric $\operatorname{dist}(\sigma,\sigma')
    =
    d_H(\sigma,\sigma')$, and statistical model $
    \{\Pi^\sigma:\sigma\in\Sigma\}$, where $\Pi^\sigma= \left(P^\sigma\right)^{\otimes n}$ denotes the distribution of the sample $S^{n}$. For any distinct
\(\sigma,\sigma'\in\Sigma\), we have $
    d_H(\sigma,\sigma')\ge \frac{m}{8}$. Thus condition \((i)\) of Proposition~\ref{minimax_tsybakov} holds with $    s=\frac{m}{16}$. Moreover, by \eqref{kl_bound}, condition \((ii)\) of Proposition~\ref{minimax_tsybakov} holds with \(\eta=1/32\). Therefore, for $M+1=|\Sigma|$
with
\[
L_{M,\eta}
:=
\frac{\sqrt M}{1+\sqrt M}
\left(
1-2\eta-\sqrt{\frac{2\eta}{\log M}}
\right),
\]
Proposition~\ref{minimax_tsybakov} yields
\[
\inf_{\widetilde\sigma}
\sup_{\sigma\in\Sigma}
\Pi^\sigma
\left(
    \mathbb E_\zeta
    \left[
        d_H(\widetilde\sigma,\sigma)
    \right]
    \ge
    \frac{sL_{M,\eta}}{2}
\right)
\ge
\frac{L_{M,\eta}}{2-L_{M,\eta}}.
\]
Since \(s=m/16\), $\eta=1/32$, and $M\geq 2$, we can write
\[
\inf_{\widetilde\sigma}
\sup_{\sigma\in\Sigma}
\Pi^\sigma
\left(
    \mathbb E_\zeta
    \left[
        d_H(\widetilde\sigma,\sigma)
    \right]
    \ge
    \frac{m}{128}
\right)
\ge
\frac{1}{5}.
\]
The bound applies in particular to the randomized estimator induced by the
learning rule. Hence there exists \(\sigma\in\Sigma\) such that
\begin{align}
\Pi^\sigma
\left(
    \mathbb E_\zeta
    \left[
        d_H(\widetilde\sigma,\sigma)
    \right]
    \ge
    \frac{m}{128}
\right)
\ge
\frac{1}{5}.
\label{eq:rand-fano-event}
\end{align}

We have $\Pi^\sigma(\bar{\mathcal{E}}_{\sigma})\geq 1-\delta$. Then, we obtain that
\[
\Pi^\sigma\left(
\mathbb E_\zeta
    \left[
        d_H(\widetilde\sigma,\sigma)
    \right]\ge \frac{m}{128}
\ \text{and}\ 
\mathcal{E}_{\sigma}
\right)
\ge \frac{1}{5}-\delta \geq c.
\]
On this event, using \eqref{eq:rand-key-reduction-amb} 
we get \[
\begin{aligned}
\overline R_{\mathrm{amb}}^\sigma(\widetilde h_0,\widetilde h_1)
\ge
c_\alpha
\left[
    \frac{\gamma}{3m}
    \cdot
    \frac{m}{128}
    -
    \frac13
    \left(
        \frac{r_0}{\alpha_0}
        +
        \frac{r_1}{\alpha_1}
    \right)
\right]=c_\alpha
\left[
    \frac{1}{6144}\sqrt{\frac{m}{n}}
    -
    \frac13
    \left(
        \frac{r_0}{\alpha_0}
        +
        \frac{r_1}{\alpha_1}
    \right)
\right]\geq c_1\sqrt{\frac{d_{\mathcal{H}}}{n}}
\end{aligned}
\]
for some constant $c_1>0$ depending only on $\alpha_0,\alpha_1$, provided that $\frac{r_0}{\alpha_0}
    +
    \frac{r_1}{\alpha_1}
    \le c'\sqrt{\frac{d_{\mathcal{H}}}{n}}$ for a sufficiently small constant $c'$. Since the oracle pair has zero ambiguity, $R_{\mathrm{amb}}^{*,\sigma}=0$, we have $
    \overline{\mathcal E}_{\mathrm{amb}}
    (\widetilde h_0,\widetilde h_1)
    =
    \overline R_{\mathrm{amb}}^\sigma(\widetilde h_0,\widetilde h_1)$. Therefore, \[
\Pi^\sigma
\left(
    \overline{\mathcal E}_{\mathrm{amb}}
    (\widetilde h_0,\widetilde h_1)
    \ge
    c_1\sqrt{\frac{d_{\mathcal{H}}}{n}}
\right)
\ge
c.
\]
Consequently, \[
\sup_{\sigma\in\Sigma}
\Pi^\sigma
\left(
    \overline{\mathcal E}_{\mathrm{amb}}
    (\widetilde h_0,\widetilde h_1)
    \ge
    c_1\sqrt{\frac{d_{\mathcal{H}}}{n}}
\right)
\ge
c.
\]
This proves the minimax lower bound.

\noindent\textbf{Case (II): $\mathbf{4\leq d_{\mathcal H}\leq 17}$.}
In this case, we use a binary testing construction. Since $d_{\mathcal H}\geq 4$,
we can choose four points $
x_0,x_1,x_{1,1},x_{2,1}\in\mathcal X$ that are shattered by $\mathcal H$. We use the same construction as in
Case~(I), specialized to $m=1$. In particular, let
$\sigma\in\{-1,+1\}$ and define
\[
(a^\sigma,b^\sigma)
:=
\begin{cases}
(x_{1,1},x_{2,1}), & \text{if } \sigma=1,\\
(x_{2,1},x_{1,1}), & \text{if } \sigma=-1.
\end{cases}
\]
For each $\sigma\in\{-1,+1\}$, define $P^\sigma$ as in Case~(I), with $
\gamma=\frac{1}{8\sqrt{n}}$. The same argument as in Case~(I) shows that there exists a feasible oracle
pair $(h_{0,\sigma}^*,h_{1,\sigma}^*)$ such that $
R_{\mathrm{amb}}^{*,\sigma}=0$. Moreover, the ambiguity-to-Hamming reduction
\eqref{eq:rand-key-reduction-amb}, specialized to $m=1$, gives, on the
feasibility event $\overline{\mathcal E}_\sigma$,
\[
\overline R_{\mathrm{amb}}^\sigma
(\widetilde h_0,\widetilde h_1)
\geq
c_\alpha
\left[
\frac{\gamma}{3}
\mathbb{E}_\zeta
\left[
d_H(\widetilde\sigma,\sigma)
\right]
-
\frac{1}{3}
\left(
\frac{r_0}{\alpha_0}
+
\frac{r_1}{\alpha_1}
\right)
\right].
\]

It remains to lower bound the error in estimating the binary parameter
$\sigma$. Let
\[
\Pi^\sigma=(P^\sigma)^{\otimes n},
\qquad
\sigma\in\{-1,+1\}.
\]
By the same KL-divergence calculation as in Case~(I),
\[
D_{\mathrm{KL}}
\left(
\Pi^{+1}\Vert\Pi^{-1}
\right)
\leq
\frac{1}{2}n(\alpha_0+\alpha_1)\gamma^2
=
\frac{\alpha_0+\alpha_1}{128}.
\]
Since $\alpha_0,\alpha_1<49/100$, Pinsker's inequality implies
\[
\mathrm{TV}
\left(
\Pi^{+1},\Pi^{-1}
\right)
\leq
\sqrt{
\frac{1}{2}
D_{\mathrm{KL}}
\left(
\Pi^{+1}\Vert\Pi^{-1}
\right)
}
\leq
\frac{1}{16}.
\]
Therefore, by Le Cam's two-point method, for every randomized estimator
$\widetilde\sigma=\widetilde\sigma(S^n,\zeta)$,
\[
\max_{\sigma\in\{-1,+1\}}
\mathbb{E}_{\Pi^\sigma}
\mathbb{E}_\zeta
\left[
d_H(\widetilde\sigma,\sigma)
\right]
\geq
\frac{1}{2}
\left(
1-
\mathrm{TV}
\left(
\Pi^{+1},\Pi^{-1}
\right)
\right)
\geq
\frac{15}{32}.
\]
Hence, for every randomized learner, there
exists $\sigma\in\{-1,+1\}$ such that $
\mathbb{E}_{\Pi^\sigma}
\mathbb{E}_\zeta
\left[
d_H(\widetilde\sigma,\sigma)
\right]
\geq
\frac{15}{32}$. Define $
Q_\sigma(S^n)
:=
\mathbb{E}_\zeta
\left[
d_H(\widetilde\sigma,\sigma)
\right]$. Since $0\leq Q_\sigma(S^n)\leq 1$, we have
\[
\mathbb{E}_{\Pi^\sigma}[Q_\sigma]
\leq
\frac{1}{8}
+
\frac{7}{8}
\Pi^\sigma
\left(
Q_\sigma\geq\frac{1}{8}
\right).
\]
Therefore,
\[
\Pi^\sigma
\left(
\mathbb{E}_\zeta
\left[
d_H(\widetilde\sigma,\sigma)
\right]
\geq
\frac{1}{8}
\right)
\geq
\frac{11}{28}.
\]
Since $
\Pi^\sigma(\overline{\mathcal E}_\sigma)\geq 1-\delta$, we obtain
\[
\Pi^\sigma
\left(
\left\{
\mathbb{E}_\zeta
\left[
d_H(\widetilde\sigma,\sigma)
\right]
\geq
\frac{1}{8}
\right\}
\cap
\overline{\mathcal E}_\sigma
\right)
\geq
\frac{11}{28}-\delta
\geq c.
\]

On this event, the ambiguity-to-Hamming reduction gives
\[
\overline R_{\mathrm{amb}}^\sigma
(\widetilde h_0,\widetilde h_1)
\geq
c_\alpha
\left[
\frac{\gamma}{24}
-
\frac{1}{3}
\left(
\frac{r_0}{\alpha_0}
+
\frac{r_1}{\alpha_1}
\right)
\right].
\]
Since $\gamma=1/(8\sqrt{n})$, we have
\[
\overline R_{\mathrm{amb}}^\sigma
(\widetilde h_0,\widetilde h_1)
\geq
c_\alpha
\left[
\frac{1}{192\sqrt{n}}
-
\frac{1}{3}
\left(
\frac{r_0}{\alpha_0}
+
\frac{r_1}{\alpha_1}
\right)
\right].
\]
By assumption,
\[
\frac{r_0}{\alpha_0}
+
\frac{r_1}{\alpha_1}
\leq
c'\sqrt{\frac{d_{\mathcal H}}{n}}
\leq
c'\sqrt{\frac{17}{n}}.
\]
Thus, for a sufficiently small constant $c'>0$, we have $
\overline R_{\mathrm{amb}}^\sigma
(\widetilde h_0,\widetilde h_1)
\geq
\frac{c_1}{\sqrt{n}}$, for some constant $c_1>0$ depending only on $\alpha_0$ and $\alpha_1$.
Since $d_{\mathcal H}\leq 17$, we have $
\frac{1}{\sqrt{n}}
\geq
\frac{1}{\sqrt{17}}
\sqrt{\frac{d_{\mathcal H}}{n}}$. Consequently, for some constant $c_2>0$ depending only on
$\alpha_0$ and $\alpha_1$, we have $
\overline R_{\mathrm{amb}}^\sigma
(\widetilde h_0,\widetilde h_1)
\geq
c_2\sqrt{\frac{d_{\mathcal H}}{n}}$. Since $R_{\mathrm{amb}}^{*,\sigma}=0$, we have $
\overline{\mathcal E}_{\mathrm{amb}}
(\widetilde h_0,\widetilde h_1)
=
\overline R_{\mathrm{amb}}^\sigma
(\widetilde h_0,\widetilde h_1)$. Therefore,
\[
\Pi^\sigma
\left(
\overline{\mathcal E}_{\mathrm{amb}}
(\widetilde h_0,\widetilde h_1)
\geq
c_2\sqrt{\frac{d_{\mathcal H}}{n}}
\right)
\geq c.
\]
This proves the lower bound for the case $4\leq d_{\mathcal H}\leq 17$.

\section{Proof of Theorem \ref{thm:empirical_surrogate_upper_bound} [Surrogate upper bound]}
\begin{proof}
For $j\in\{0,1\}$, consider the composed class
\[
\phi\circ(-\mathcal F)
:=
\left\{
x\mapsto\phi(-f(x)):f\in\mathcal F
\right\}.
\]

By the contraction inequality
\cite[Theorem~12]{bartlett2002rademacher} and Assumption \ref{ass:rademacher_complexity}, for every $m\geq1$,
\begin{align}\label{contraction_bound}
\mathfrak R_m\bigl(\phi\circ(-\mathcal F)\bigr)
\leq
2L_\phi\mathfrak R_m(\mathcal F)
\leq
\frac{2L_\phi B_{\mathcal F}}{\sqrt m}.
\end{align}
Moreover, every function in $\phi\circ(-\mathcal F)$ takes values in
$[0,C_\phi]$. For each $j\in\{0,1\}$, define
\[
Z_j
:=
\sup_{f\in\mathcal F}
\left|
R_{\phi,j}(f)-\widehat R_{\phi,j}(f)
\right|.
\]
For every integer $m\geq1$, conditional on the event $\{n_j=m\}$, the
observations $\{X_i:Y_i=j\}$ are i.i.d.\ from $P_j$. Therefore, the
Rademacher uniform deviation inequality \cite[Theorem~3.3]{mohri2018foundations} implies that
\[
\mathbb P\left(
Z_j>
2\mathfrak R_m\bigl(\phi\circ(-\mathcal F)\bigr)
+
C_\phi\sqrt{\frac{\log(6/\delta)}{2m}}
\,\middle|\,
n_j=m
\right)
\leq
\frac{\delta}{3}.
\]
Using \eqref{contraction_bound}, we obtain
\[
\mathbb P\left(
Z_j>
\frac{4L_\phi B_{\mathcal F}}{\sqrt m}
+
C_\phi\sqrt{\frac{\log(6/\delta)}{2m}}
\,\middle|\,
n_j=m
\right)=\mathbb P\left(
Z_j > \epsilon_\phi(m,\delta)
\,\middle|\,
n_j=m
\right)
\leq
\frac{\delta}{3}.
\]
Because $\mathcal E_n$ is determined by the class counts and implies
$n_j\geq1$, the law of total probability yields

Then,
\[
\begin{aligned}
\mathbb P\left(
Z_j>\epsilon_\phi(n_j,\delta)\,\middle|\,
\mathcal E_n
\right)
=
\sum_{m=1}^{n-1}
\mathbb P\left(
Z_j>\epsilon_\phi(m,\delta)
\,\middle|\,
n_j=m
\right)
\mathbb P\left(
n_j=m \,\middle|\,\mathcal E_n
\right)\leq\frac{\delta}{3}
\sum_{m=1}^{n-1}
\mathbb P\left(
n_j=m\,\middle|\,\mathcal E_n
\right)
\leq
\frac{\delta}{3}.
\end{aligned}
\] 

Thus, Defining the event $
\mathcal A
:=\mathcal E_n\cap
\left\{
Z_j\leq\epsilon_\phi(n_j,\delta) \ \text{for} \ j=0,1\right\}$,
we have $
\mathbb P\left(\mathcal A\,\middle|\,\mathcal E_n\right)
\geq
1-\frac{2\delta}{3}$.

We next control the surrogate ambiguity objective. Define
\[
\mathcal G_\diamond
:=
\left\{
x\mapsto
\ell_{\mathrm{amb},\phi}^{\diamond}(f_0,f_1;x):
(f_0,f_1)\in\mathcal F\times\mathcal F
\right\}.
\]
For the additive surrogate ambiguity loss, the fact that the positive-part
map is $1$-Lipschitz gives
\[
\left|
[u+v-1]_+-[u'+v'-1]_+
\right|
\leq
|u-u'|+|v-v'|.
\]
Consequently,
\[
\begin{aligned}
\left|
\ell_{\mathrm{amb},\phi}^{\mathrm{add}}
(f_0,f_1;x)
-
\ell_{\mathrm{amb},\phi}^{\mathrm{add}}
(g_0,g_1;x)
\right|&\leq
\left|
\phi(f_0(x))-\phi(g_0(x))
\right|
+
\left|
\phi(f_1(x))-\phi(g_1(x))
\right|\\
&\leq
L_\phi
\left(
|f_0(x)-g_0(x)|
+
|f_1(x)-g_1(x)|\right).
\end{aligned}
\]
Moreover, $
0
\leq
\ell_{\mathrm{amb},\phi}^{\mathrm{add}}(f_0,f_1;x)
\leq
2C_\phi$. For the multiplicative surrogate ambiguity loss, we have
\[
\begin{aligned}
\left|
\phi(f_0(x))\phi(f_1(x))
-
\phi(g_0(x))\phi(g_1(x))
\right|&\leq
\phi(f_1(x))
\left|
\phi(f_0(x))-\phi(g_0(x))
\right|
+
\phi(g_0(x))
\left|
\phi(f_1(x))-\phi(g_1(x))
\right|\\
&\leq
C_\phi L_\phi
\left(
|f_0(x)-g_0(x)|+
|f_1(x)-g_1(x)|
\right),
\end{aligned}
\]
Also, $
0
\leq
\ell_{\mathrm{amb},\phi}^{\mathrm{prod}}(f_0,f_1;x)
\leq
C_\phi^2$. Recalling that $
\kappa_{\mathrm{add}}=1$, and $\kappa_{\mathrm{prod}}=C_\phi$, the vector contraction inequality \citep[Corollary~1 and Section~4.1]{maurer2016vector} and
Assumption~\ref{ass:rademacher_complexity} imply that there exists a
universal constant $C>0$ such that
\[
\mathfrak R_n(\mathcal G_\diamond)
\leq
C\kappa_\diamond L_\phi
\left(
\mathfrak R_n(\mathcal F)
+
\mathfrak R_n(\mathcal F)
\right)
\leq
C\frac{\kappa_\diamond L_\phi B_{\mathcal F}}{\sqrt n},
\]
Define
\[
Z_{\mathrm{amb}}^\diamond
:=
\sup_{(f_0,f_1)\in\mathcal F\times\mathcal F}
\left|
R_{\mathrm{amb},\phi}^{\diamond}(f_0,f_1)
-
\widehat R_{\mathrm{amb},\phi}^{\diamond}(f_0,f_1)
\right|.
\]
Since every function in $\mathcal G_\diamond$ takes values in
$[0,M_\diamond]$, the Rademacher uniform deviation inequality gives
\[
\mathbb P\left(
Z_{\mathrm{amb}}^\diamond
>
2\mathfrak R_n(\mathcal G_\diamond)
+
M_\diamond
\sqrt{\frac{\log(6/\delta)}{2n}}
\right)=\mathbb P\left(
Z_{\mathrm{amb}}^\diamond
>\epsilon_{\mathrm{amb},\phi}^\diamond(n,\delta)
\right)
\leq
\frac{\delta}{3}.
\]
Let $
\mathcal A_{\mathrm{amb}}^\diamond
:=
\left\{
Z_{\mathrm{amb}}^\diamond
\leq
\epsilon_{\mathrm{amb},\phi}^\diamond(n,\delta)
\right\}$. Then we obtain
\[
\begin{aligned}
\mathbb P\left(
\mathcal A\cap\mathcal A_{\mathrm{amb}}^\diamond
\,\middle|\,\mathcal E_n
\right)
&\ge 1-\frac{2\delta}{3}-\frac{\delta}{3\mathbb P(\mathcal E_n)}
\end{aligned}
\]
It remains to establish the claimed conclusions, conditional on
$\mathcal E_n$, on the event
$\mathcal A\cap\mathcal A_{\mathrm{amb}}^\diamond$. On $\mathcal A$, for $j\in\{0,1\}$ we have
\[
\widehat R_{\phi,j}(f_j^{\diamond,*})
\leq
R_{\phi,j}(f_j^{\diamond,*})
+
\epsilon_\phi(n_j,\delta)
\leq
\alpha_j+\epsilon_\phi(n_j,\delta).
\]
Thus, $f^{\diamond,*}=(f_0^{\diamond,*},f_1^{\diamond,*})$ is feasible for the empirical optimization
problem. By the empirical optimality of
$\widehat f^\diamond
=(\widehat f_0^\diamond,\widehat f_1^\diamond)$, we have
$
\widehat R_{\mathrm{amb},\phi}^{\diamond}
(\widehat f_0^\diamond,\widehat f_1^\diamond)
\leq
\widehat R_{\mathrm{amb},\phi}^{\diamond}
(f_0^{\diamond,*},f_1^{\diamond,*})$. On $\mathcal A_{\mathrm{amb}}^\diamond$, it follows that\[
\begin{aligned}
R_{\mathrm{amb},\phi}^{\diamond}
(\widehat f_0^\diamond,\widehat f_1^\diamond)
\leq
\widehat R_{\mathrm{amb},\phi}^{\diamond}
(\widehat f_0^\diamond,\widehat f_1^\diamond)
+
\epsilon_{\mathrm{amb},\phi}^{\diamond}(n,\delta)&\leq
\widehat R_{\mathrm{amb},\phi}^{\diamond}
(f_0^{\diamond,*},f_1^{\diamond,*})
+
\epsilon_{\mathrm{amb},\phi}^{\diamond}(n,\delta)\\
&\leq
R_{\mathrm{amb},\phi}^{\diamond}
(f_0^{\diamond,*},f_1^{\diamond,*})
+
2\epsilon_{\mathrm{amb},\phi}^{\diamond}(n,\delta).
\end{aligned}
\]
Therefore, $
\mathcal E_{\mathrm{amb},\phi}^{\diamond}
(\widehat f_0^\diamond,\widehat f_1^\diamond)
\leq
2\epsilon_{\mathrm{amb},\phi}^{\diamond}(n,\delta).$ Finally, empirical feasibility gives
\[
\widehat R_{\phi,j}(\widehat f_j^\diamond)
\leq
\alpha_j+\epsilon_\phi(n_j,\delta),
\qquad j\in\{0,1\}.
\]
Hence, on $\mathcal A$,
\[
\begin{aligned}
R_{\phi,j}(\widehat f_j^\diamond)
\leq
\widehat R_{\phi,j}(\widehat f_j^\diamond)
+
\epsilon_\phi(n_j,\delta)\leq
\alpha_j+2\epsilon_\phi(n_j,\delta).
\end{aligned}
\]
Since the surrogate loss upper bounds the corresponding indicator loss, $
R_j(h_{\widehat f_j^\diamond})
\leq
R_{\phi,j}(\widehat f_j^\diamond)$, and therefore
\[
R_j(h_{\widehat f_j^\diamond})
\leq
\alpha_j+2\epsilon_\phi(n_j,\delta),
\qquad j\in\{0,1\}.
\]
This completes the proof.
\end{proof}

\section{Proof of Theorem~\ref{thm:optimization}}

\begin{proof}
According to~\cite{NIPS2012_c52f1bd6},
Algorithm~\ref{algorithm: SGD with One Projection} guarantees that,
conditional on the training sample, with probability at least
$1-\delta$, we have
\[
\widehat R_{\mathrm{amb},\phi}^{\mathrm{add}}
(f_{\widehat\theta_0},f_{\widehat\theta_1})
-
\widehat R_{\mathrm{amb},\phi}^{\mathrm{add}}
(f_{\widehat\theta_0^\star},f_{\widehat\theta_1^\star})
\lesssim
\frac{1}{\sqrt T}
\left[
\left(1+\frac{G_1^2}{\rho^2}\right)c_\delta
+\frac{G_1^2G_2^2}{\rho^2c_\delta}
+G_1\sqrt{\ln\frac{2}{\delta}}
\right].
\]
Taking
\[
T\gtrsim
\frac{1}{
\bigl(\epsilon_{\mathrm{amb},\phi}^{\mathrm{add}}(n,\delta)\bigr)^2}
\left[
\left(1+\frac{G_1^2}{\rho^2}\right)c_\delta
+\frac{G_1^2G_2^2}{\rho^2c_\delta}
+G_1\sqrt{\ln\frac{2}{\delta}}
\right]^2
\]
with a sufficiently large implicit constant ensures that
\[
\widehat R_{\mathrm{amb},\phi}^{\mathrm{add}}
(f_{\widehat\theta_0},f_{\widehat\theta_1})
-
\widehat R_{\mathrm{amb},\phi}^{\mathrm{add}}
(f_{\widehat\theta_0^\star},f_{\widehat\theta_1^\star})
\le
\epsilon_{\mathrm{amb},\phi}^{\mathrm{add}}(n,\delta).
\]
Following the same uniform-deviation argument as in the proof of
Theorem \ref{thm:empirical_surrogate_upper_bound}, this additional optimization error yields
\[
\mathcal E_{\mathrm{amb},\phi}^{\mathrm{add}}
(f_{\widehat\theta_0},f_{\widehat\theta_1})
\le
3\epsilon_{\mathrm{amb},\phi}^{\mathrm{add}}(n,\delta).
\]
Since $\widehat\theta\in K$, the empirical constraints hold, so the
feasibility guarantees follow exactly as in Theorem \ref{thm:empirical_surrogate_upper_bound}.
A union bound accounts for the additional failure probability of
at most $\delta$.
\end{proof}

\section{Auxiliary Results}
\begin{proof}[Proof of Proposition \ref{minimax_tsybakov}]\label{proof_Auxiliary_Results}
Fix an arbitrary randomized estimator $\widetilde{\theta}=\widetilde{\theta}(S^n,\zeta)$. Let $\nu$ denote the distribution of $\zeta$. Under parameter $\theta$, the joint law of $(S^n,\zeta)$ is $P_{\theta}^{\otimes n}\otimes \nu$. For every $j=1,...,M$,
\[
D_{\mathrm{KL}}\!\left(
P_{\theta_j}^{\otimes n}\otimes \nu
\,\middle\|\,
P_{\theta_0}^{\otimes n}\otimes \nu
\right)
=
D_{\mathrm{KL}}\!\left(
P_{\theta_j}^{\otimes n}
\,\middle\|\,
P_{\theta_0}^{\otimes n}
\right).
\]
Therefore,
\[
\frac{1}{M}\sum_{j=1}^{M}
D_{\mathrm{KL}}\!\left(
P_{\theta_j}^{\otimes n}\otimes \nu
\,\middle\|\,
P_{\theta_0}^{\otimes n}\otimes \nu
\right)
\leq \eta \log M .
\]
Now view $\widetilde{\theta}(S^n,\zeta)$ as an estimator based on the enlarged observation $(S^n,\zeta)$. Applying the usual Fano/Tsybakov lower bound [\cite{Tsybakov:1315296}, Theorem 2.5] to the family of distributions $\left\{
P_{\theta_j}^{\otimes n}\otimes \nu : j=0,1,\ldots,M
\right\}$, we obtain
\[
\sup_{0\leq j\leq M}
\left(P_{\theta}^{\otimes n}\otimes \nu\right)
\left(
d\bigl(\widetilde\theta(S^n,\zeta),\theta\bigr)\ge s
\right)
\ge L_{M,\eta}.
\]

Equivalently,
\[
\sup_{0\le j\le M}
\mathbb{E}_{S^n\sim P_{\theta_j}^{\otimes n}}
\mathbb{E}_{\zeta}
\left[
\mathbf{1}\left\{
d\bigl(\widetilde{\theta}(S^n,\zeta),\theta_j\bigr)\ge s
\right\}
\right]
\ge L_{M,\eta}.
\]
Hence there exists some $j^\star\in\{0,\ldots,M\}$ such that
\[
\mathbb{E}_{\theta_{j^\star}}\mathbb{E}_{\zeta}
\left[
\mathbf{1}\left\{
d\bigl(\widetilde{\theta}(S^n,\zeta),\theta_{j^\star}\bigr)\ge s
\right\}
\right]
\ge L_{M,\eta}.
\]
Define
\[
Q(S^n)
:=
\mathbb{E}_{\zeta}
\left[
\mathbf{1}\left\{
d\bigl(\widetilde{\theta}(S^n,\zeta),\theta_{j^\star}\bigr)\ge s
\right\}
\right].
\]
For any $t\in(0,L_{M,\eta})$, we have
\[
L_{M,\eta}\leq \mathbb{E}_{\theta_{j^\star}}\!\left[Q(S^n)\right]
\le
t
+
(1-t)
P_{\theta_{j^\star}}^{\otimes n}
\left(Q(S^n)\ge t\right).
\]
Therefore, $P_{\theta_{j^*}}^{\otimes n}\left(Q(S^n)\ge t\right)
\ge
\frac{L_{M,\eta}-t}{1-t}.$ For $t=\frac{L_{M,\eta}}{2}$, we get \[P_{\theta_{j^*}}^{\otimes n}
\left(
Q(S^n)\ge \frac{L_{M,\eta}}{2}
\right)
\ge
\frac{L_{M,\eta}}{2-L_{M,\eta}}.\]
Now observe that, given $S^n$,
\[
\mathbb{E}_{\zeta}
\left[
d\bigl(\widetilde{\theta}(S^n,\zeta),\theta_{j^*}\bigr)
\right]
\ge
s Q(S^n).
\]
Hence,
\[
\left\{
Q(S^n)\ge \frac{L_{M,\eta}}{2}
\right\}
\subseteq
\left\{
\mathbb{E}_{\zeta}
\left[
d\bigl(\widetilde{\theta}(S^n,\zeta),\theta_{j^*}\bigr)
\right]
\ge
\frac{sL_{M,\eta}}{2}
\right\}.
\]
Therefore,
\[
P_{\theta_{j^*}}^{\otimes n}
\left(
\mathbb{E}_{\zeta}
\left[
d\bigl(\widetilde{\theta}(S^n,\zeta),\theta_{j^*}\bigr)
\right]
\ge
\frac{sL_{M,\eta}}{2}
\right)
\ge
\frac{L_{M,\eta}}{2-L_{M,\eta}}.
\]
This holds for every randomized estimator $\widetilde{\theta}$. Taking the infimum over all randomized estimators gives
\[
\inf_{\widetilde{\theta}}
\sup_{\theta\in\Theta}
P_{\theta}^{\otimes n}
\left(
\mathbb{E}_{\zeta}
\left[
d\bigl(\widetilde{\theta}(S^n,\zeta),\theta\bigr)
\right]
\ge
\frac{sL_{M,\eta}}{2}
\right)
\ge
\frac{L_{M,\eta}}{2-L_{M,\eta}}.
\]
This concludes the proof.
\end{proof}

\begin{lemma}\label{lemma:VC_bound} Let $\mathcal H$ be a hypothesis class with $0<d_{\mathcal{H}}<\infty$, and define \[ \mathcal G := \left\{ g_{h_0,h_1}:x\mapsto h_0(x)h_1(x) \;:\; h_0,h_1\in\mathcal H \right\}. \] Then $ d_{\mathcal{G}}\leq C d_{\mathcal{H}}$ for a universal constant $C>0$. \end{lemma}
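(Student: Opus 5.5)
The plan is to bound the growth function of $\mathcal G$ by that of $\mathcal H$ and then invert the resulting inequality via the Sauer--Shelah lemma.

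Fix any finite set $S=\{x_1,\ldots,x_m\}\subseteq\mathcal X$. Every labeling of $S$ realized by some $g_{h_0,h_1}\in\mathcal G$ is the coordinatewise product of a labeling realized by $h_0$ and a labeling realized by $h_1$. Hence the number of distinct restrictions of $\mathcal G$ to $S$ is at most the square of the number of distinct restrictions of $\mathcal H$ to $S$. This gives
\[
\Pi_{\mathcal G}(m)\le \Pi_{\mathcal H}(m)^2 .
\]
By Sauer--Shelah, for $m\ge d_{\mathcal H}$,
\[
\Pi_{\mathcal H}(m)\le \left(\frac{em}{d_{\mathcal H}}\right)^{d_{\mathcal H}},
\]
and therefore
\[
\Pi_{\mathcal G}(m)\le \left(\frac{em}{d_{\mathcal H}}\right)^{2d_{\mathcal H}} .
\]

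Now suppose $\mathcal G$ shatters a set of size $m$, so that $2^m=\Pi_{\mathcal G}(m)$. If $m<d_{\mathcal H}$, the desired bound holds trivially. Otherwise we get $2^m\le (em/d_{\mathcal H})^{2d_{\mathcal H}}$. Write $t=m/d_{\mathcal H}\ge 1$; the inequality becomes
\[
t\log 2\le 2\log(et).
\]
The left side grows linearly in $t$ while the right side grows only logarithmically, so this inequality fails for all $t$ beyond some absolute constant $t_0$. For instance, $t=20$ gives $13.86>2\log(20e)\approx 7.99$, and the gap only widens as $t$ increases, since the derivative $\log 2$ of the left side exceeds the derivative $2/t$ of the right side once $t>2/\log 2$. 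Hence $m\le t_0 d_{\mathcal H}$, so $d_{\mathcal G}\le C d_{\mathcal H}$ with $C=20$, for example.

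The only step needing any care is solving the transcendental inequality, and the monotonicity comparison above settles it. The hypothesis $d_{\mathcal H}>0$ is used only so that the ratio $m/d_{\mathcal H}$ in the Sauer--Shelah bound is well defined. If the constant-zero function belongs to $\mathcal G$, this changes nothing, because the counting argument is unaffected.
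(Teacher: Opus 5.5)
Your proposal is correct and follows essentially the same route as the paper: bound $\Pi_{\mathcal G}(m)\le\Pi_{\mathcal H}(m)^2$, apply Sauer--Shelah, and show that $2^m\le (em/d_{\mathcal H})^{2d_{\mathcal H}}$ fails once $m$ is a large constant multiple of $d_{\mathcal H}$. The difference is cosmetic. The paper checks only $m=10d_{\mathcal H}$, implicitly using that shattering passes to subsets, and gets $C=10$. You instead use a derivative comparison to rule out every $t\ge 20$ at once and get $C=20$.
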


\begin{proof} Let $\Pi_{\mathcal H}(m)$ and $\Pi_{\mathcal G}(m)$ denote the growth functions of $\mathcal H$ and $\mathcal G$, respectively. For any fixed points $x_1,\ldots,x_m$, each labeling induced by a function $g_{h_0,h_1}\in\mathcal G$ is completely determined by the two labelings \[ \bigl(h_0(x_1),\ldots,h_0(x_m)\bigr) \qquad\text{and}\qquad \bigl(h_1(x_1),\ldots,h_1(x_m)\bigr). \] Therefore, $ \Pi_{\mathcal G}(m) \leq \Pi_{\mathcal H}(m)^2$. By Sauer's lemma, for $m\geq d_{\mathcal H}$, we have $\Pi_{\mathcal H}(m) \leq \left(\frac{em}{d_{\mathcal H}}\right)^{d_{\mathcal H}}$, and hence $ \Pi_{\mathcal G}(m) \leq \left(\frac{em}{d_{\mathcal H}}\right)^{2d_{\mathcal H}}$. Now suppose that $\mathcal G$ shatters $m$ points. If $m<d_{\mathcal H}$, then trivially $m=O(d_{\mathcal H})$. Otherwise, $m\geq d_{\mathcal H}$, and since $\Pi_{\mathcal G}(m)=2^m$, we obtain $2^m \leq \left(\frac{em}{d_{\mathcal H}}\right)^{2d_{\mathcal H}}$. Taking $m=10d_{\mathcal H}$, we obtain $ 2^{10d_{\mathcal H}} \leq (10e)^{2d_{\mathcal H}}$, which is impossible. Thus $\mathcal G$ cannot shatter $10d_{\mathcal H}$ points, and consequently $ d_{\mathcal G}<10d_{\mathcal H}$. This proves $d_{\mathcal G}\leq Cd_{\mathcal H}$ for a universal constant $C$.
\end{proof}

\end{document}